\documentclass[11pt]{article}

\usepackage[final]{acl}

\usepackage{times}
\usepackage{latexsym}

\usepackage[T1]{fontenc}

\usepackage[utf8]{inputenc}

\usepackage{microtype}

\usepackage{inconsolata}

\usepackage{graphicx}
\usepackage{subcaption}
\usepackage{url}            
\usepackage{booktabs}       
\usepackage{nicefrac}       
\usepackage{xcolor}         

\usepackage{amsmath}
\usepackage{amssymb}
\usepackage{mathtools}
\usepackage{amsthm}
\usepackage{comment}
\usepackage{enumitem}
\usepackage{amsfonts}

\usepackage{tcolorbox}              
\usepackage{algorithm}
\usepackage{adjustbox}
\usepackage{bbm}
\usepackage{pifont}
\usepackage{multirow}
\usepackage[capitalize,noabbrev]{cleveref}
\usepackage[table]{xcolor}

\usepackage{algpseudocode}

\crefname{equation}{Eq.}{Eqs.}
\Crefname{equation}{Equation}{Equations}

\crefname{figure}{Fig.}{Figs.}
\Crefname{figure}{Figure}{Figures}

\crefname{table}{Tab.}{Tabs.}
\Crefname{table}{Table}{Tables}

\newcommand{\topic}[1]{\noindent\textbf{#1}}

\usepackage{xspace}
\newcommand{\shortname}{\textsc{Q-Skew}\xspace}

\theoremstyle{plain}
\newtheorem{theorem}{Theorem}[section]

\theoremstyle{definition}

\newtheorem{assumption}[theorem]{Assumption}
\theoremstyle{remark}
\newtheorem{remark}[theorem]{Remark}

\title{%
    Membership Inference in Fine-tuned Diffusion Language Models \\via Token-level Memorization Asymmetry}

\author{
Shengfang Zhai\textsuperscript{1}%
\thanks{Correspondence: shengfang.zhai@nus.edu.sg}, \,
Leo Marchyok\textsuperscript{2}, \,
Yuling Shi\textsuperscript{3}, \,
Huanran Chen\textsuperscript{4}, 
\\
\textbf{
Yinpeng Dong\textsuperscript{4}, \,
Jiaheng Zhang\textsuperscript{1}, \,
Sanghyun Hong\textsuperscript{2} 
}
\\[0.5ex]
\textsuperscript{1} National University of Singapore,
\textsuperscript{2} Oregon State University \\
\textsuperscript{3} Shanghai Jiao Tong University,
\textsuperscript{4} College of AI, Tsinghua University 
}

\begin{document}
\maketitle

\begin{abstract}

Diffusion language models (DLMs) have recently emerged as an alternative modeling paradigm to autoregressive LMs, offering advantages such as parallel generation and bidirectional context modeling.
Despite growing interest in their generative capabilities, the privacy risks of DLMs remain underexplored.
We identify a phenomenon termed \emph{token-level memorization asymmetry} through theoretical analysis of diffusion training dynamics.
Building on this finding, we propose \shortname,
a quantile-weighted skewness-based indicator for membership inference on finetuned DLMs.
Experiments across multiple fine-tuning datasets and models show that our method outperforms existing baselines.
Moreover, we show that \shortname can also facilitate 
other privacy violations, such as PII extraction attacks.
Our findings reveal a previously underexplored privacy attack surface and highlight the need for systematic privacy evaluation of DLMs.
\end{abstract}

\section{Introduction}
\label{sec:intro}

Diffusion language models (DLMs)~\cite{nie2025large,bie2025llada2, ye2025dream} are emerging as a compelling alternative to autoregressive (AR) generation. 
Recent advances demonstrate that diffusion can be successfully adapted from continuous domains to text, leading to practical systems such as Gemini Diffusion~\cite{gemini_diffusion_2025}, SEED Diffusion~\cite{song2025seed}, and a growing body of work from both academia~\cite{nie2025large} and industry~\cite{inception_mercury_chat_2025}. 
Unlike AR models, which strictly generate tokens sequentially, DLMs generate text through iterative denoising processes that refine entire sequences over multiple steps. 
This paradigm offers capabilities that AR modeling does not naturally support, including flexible decoding~\cite{de2025accelerated}, global sequence refinement~\cite{nie2025large}, and improved controllability over generation dynamics~\cite{li2022diffusion}, making diffusion an attractive generation paradigm other than next-token generation.

Driven by ongoing research efforts, state-of-the-art DLMs gradually achieve generative capabilities comparable to those of AR models, both in open-source models~\cite{bie2025llada2} and in commercial SaaS services~\cite{inception_mercury_chat_2025}.
However, few studies investigate the privacy issues of DLMs as a new language model architecture.
In this paper, we mainly focus on Membership Inference (MI), which aims to determine whether a given data point is used to train the target model. 
On one hand, a malicious adversary can leverage this technique to violate the model privacy~\cite{shokri2017membership}. 
On the other hand, membership inference can also be used for unauthorized data auditing~\cite{dealcala2024my}. 

A series of studies have developed membership inference methods against common (autoregressive) language models~\cite{yeom2018privacy}, or the (vision) diffusion models. 
However, applying these methods to diffusion language models poses significant challenges.
First, DLMs adopt an any-order denoising training objective rather than a single fixed left-to-right decomposition, which acts as an implicit Monte Carlo data augmentation mechanism and reduces exact sample memorization~\cite{ni2025diffusion}.
Second, the diffusion training process introduces randomness for each token, since different tokens are masked with different probabilities. 
This fundamentally breaks methods based on lower-tail token scores, such as MinK~\cite{shi2023detecting} and Min-K++~\cite{zhang2024min}.
Third, membership inference methods on vision diffusion models~\cite{ho2020denoising} show degraded performance on DLMs due to the transition of the state space from continuous to discrete.

To address this challenge, we conduct an in-depth analysis of diffusion training dynamics and find that the model's single-step memorization gain for each token is inversely proportional to the mask ratio during diffusion training process. 
Based on this, we theoretically derive the property of token-level memorization asymmetry and accordingly propose a skewness-based MI indicator: \textbf{Q}uantile-weighted \textbf{Skew}ness (\shortname) of the sample tokens.
Our method can effectively distinguish member sets from non-member sets from a data distribution perspective, beyond the memorization at the individual sample level.
We conduct experiments across different finetuning objectives, dataset domains, and types of diffusion language models, including both base and instruction-tuned diffusion language models. 
\shortname outperforms baselines on average, and this superiority remains consistent across different training epochs, with an average AUC improvement of more than 10\%. 
Furthermore, we show that the skewness indicator can also enhance the effectiveness of PII (Personally Identifiable Information) extraction.

Notably, there is a concurrent work SAMA \cite{chen2026membership} which also focuses on membership inference.
Our work differs in two aspects: (1) we propose a new indicator beyond memorization at the individual sample level, and (2) our evaluation covers both instruction fine-tuning and domain fine-tuning (same training objective with pretraining), rather than only domain fine-tuning as in prior works~\cite{fu2024membership,chen2026membership}. Our method consistently outperforms theirs in both settings.

In summary, our contributions are: 
\begin{itemize}[itemsep=0.1em, topsep=0.2em, leftmargin=1.4em]
    \item We are the first to identify an inverse relationship 
    between mask ratio and token memory gain, 
    and theoretically characterize Token-level Memorization Asymmetry. 
    \item  We propose \shortname, 
    a novel membership inference 
    indicator based on the inverse mask-ratio phenomenon
    that outperforms baselines across 
    datasets and training settings.
    \item We provide a new perspective on privacy risks in DLMs,
    showing that skewness also improves PII extraction
    and highlighting fundamental differences from AR models.
\end{itemize}

\section{Background and Related Work}
\label{sec:background}

\subsection{Diffusion Language Models}
\label{subsec:dllms}


Unlike traditional autoregressive generation, diffusion language models (DLMs) 
recover data from a fully masked state through iterative denoising.
Currently, successful large-scale DLMs are based on the discrete masked diffusion model architecture. Taking this as an example, we formalize the training and inference processes of DLMs as follows. 
Let $\mathbf{x}_0 = (x_0^1, x_0^2, \dots, x_0^L)$ denotes a sample of the training set $\mathcal{D}_\text{train}$, consisting of $L$ discrete tokens, where  $x_0^i \in \mathcal{V}$.
The diffusion process is defined on an extended vocabulary  $\mathcal{V}^+ = \mathcal{V} \cup \{[\texttt{MASK}]\}$ that includes a mask token.
The forward diffusion process in DLMs is defined as a Markov chain that gradually replaces tokens with mask tokens. Then we can derive a closed-form solution to sample the state $\mathbf{x}_t$ directly at timestep $t$:
$$q(x_t^i | x_0^i) = \begin{cases} 
1 - \bar{\alpha}_t & \text{if } x_t^i = [\texttt{MASK}] \\
\bar{\alpha}_t & \text{if } x_t^i = x_0^i \\
0 & \text{otherwise},
\end{cases}$$
where $\bar{\alpha}_t$ denotes the cumulative probability that a token remains unmasked at timestep $t$.
Then the parameterized denoiser $p_\theta(\mathbf{x}_0 \mid \mathbf{x}_t)$ is trained using the following masked-position denoising objective:
\begin{equation}
    \small
    \mathcal{L}_{\text{DLM}}(\theta) = \mathbb{E}_{t, \mathbf{x}_0, \mathbf{x}_t} \left[ \frac{1}{|\mathcal{M}_t|} \sum_{i \in \mathcal{M}_t} -\log p_\theta(x_0^i | \mathbf{x}_t) \right],
\end{equation}
where $t \sim \mathcal{U}(1, T)$, $\mathbf{x}_0\sim\mathcal{D}_{\text{train}}$, and  $\mathcal{I}_t = \{i \mid x_t^i = [\texttt{MASK}]\}$ denotes indices of masked tokens in timestamp $t$.
During the inference process, the model $\theta$ starts from a fully masked sequence $\mathbf{x}_T$, and performs the following update steps through iterative sampling until $t=0$:
\begin{equation*}
    \mathbf{x}_{t-1} \sim p_\theta(\mathbf{x}_{t-1} | \mathbf{x}_t).  
\end{equation*}

\subsection{Membership Inference}
Membership inference aims to determine whether a given data sample belongs to the member set (training data) of a target model~\citep{shokri2017membership}. 
This task is widely regarded as a core method for quantifying privacy risks or auditing unauthorized data usage~\citep{dealcala2024my}. 
For generative models, the high number of parameters results in significant fine-tuning overhead. Hence, existing effective membership inference mainly falls under query-based methods, which typically require a gray-box setting, i.e., obtaining output logits. 

\noindent\textbf{Membership inference on DMs.}  
Beyond the security concerns
\citep{zhai2023text,zhai2024discovering,zhai2025efficient,zhai2026baddlm}, privacy studies on DMs have primarily focused on membership inference (MI).
However, they mainly focus on visual (continuous) diffusion models~\citep{ho2020denoising}. In the grey-box setting, these methods examine the characteristics of diffusion models and design various metrics to determine member and non-member sets, such as: (1) estimation errors of deterministic forward process~\citep{duan2023diffusion,kong2024an}, (2) probability fluctuations in the presence of perturbations~\citep{fu2024probabilisticfluctuationbasedmembership}; (3) likelihood discrepancy with different inputs~\citep{zhai2024membership}, et al.

\noindent\textbf{Membership inference on LLMs.}  
Many studies explore membership inference for LLMs \citep{yeom2018privacy,shi2023detecting,zhang2024min}. In the fine-tuning scenario, due to the open availability of base models, methods for LLMs typically involve an additional assumption: the attacker has access to reference models. \citet{watson2022on} firstly introduces this assumption and proposes \textit{Calibration Attack}.
SPV-MIA~\cite{fu2024membership} leverages self-prompting to construct reference models, leading to more precise inference.  
Similarly, the concurrent work, SAMA~\citep{chen2026membership}, conducts membership inference on diffusion language models (DLMs) following the same setting. In contrast to our work, their empirical approach lacks theoretical analysis of the difference between DLMs and traditional AR language models, leading to less effectiveness in practice (\cref{table:main_exp_dft}, \cref{table:main_exp_ift}).

\section{Methodology}
\label{sec:membership-inference}

\subsection{Threat Model}
\label{subsec:threat}

In this work, we consider an adversary that aims to infer whether a given data record was used to train the target model $\theta_{\text{tar}}$. Since the pre-training process primarily utilizes crawled public data and is implemented by only a few third-party institutions, we focus on the fine-tuning stage, which is generally recognized as the stage that is most susceptible to privacy or copyright risks~\cite{yudifferentially, fu2024membership,chen2026membership}.

We strictly follow the mainstream setting of query-based membership inference~\cite {shi2023detecting,zhang2024min,duan2023diffusion,watson2022on}, where the adversary can only access the output logits of the target model $\theta_\text{tar}$, without having access to the model weights or gradients. The formalization is:
$$
\mathcal{A}(\mathbf{x},\theta_\text{tar}) = \mathbbm{1} \left[
\mathcal{A}'(\mathbf{x},\theta_\text{tar})
> \tau  \right],
$$
where $\mathcal{A}'$ denotes an indicator function that reflects membership information, and $\tau$ denotes a tunable decision threshold. 
Following the state-of-the-art reference-based membership inference methods~\cite{meng2025rr,watson2022on,fu2024membership,huang2025df}, we further assume that the adversary has access to an unfinetuned reference model $\theta_{\text{ref}}$. Then, a more precise strategy of membership inference can be formalized as:
$$
\mathcal{A}(\mathbf{x},\theta_\text{tar}) = \mathbbm{1} \left[
\mathcal{A}'(\mathbf{x},\theta_\text{tar},\theta_\text{ref})
> \tau  \right].
$$
Note that this setup is easily achievable for membership inference on fine-tuning scenarios due to the availability of open-sourced base models. And we further consider a weaker assumption involving misaligned reference models in \cref{sec:weaker}.

\noindent\textbf{Fine-tuning setups.}
Notably, previous membership inference methods on fine-tuned language models ~\citep{fu2024membership, chen2026membership} consider only domain fine-tuning, which shares the same training objective as the pretraining stage and utilizes a fixed training block size, such as 64, 128, or 256 tokens. 
However, we emphasize that these evaluation setups are limited: \ding{182} they do not represent the fine-tuning scenarios for instruction-tuning, which constitute the primary fine-tuning paradigm. \ding{183} they may lead to an inflated illusion of membership inference success by failing to account for the variable-length nature of instruction-tuning datasets.
Therefore, in our experiments, we evaluate both \textit{domain fine-tuning (DFT)} and \textit{instruction fine-tuning (IFT)} scenarios to align with real-world scenarios (\cref{sec:evaluation}).

\subsection{Token-level Memorization Asymmetry}

\begin{figure}[t]
    \centering
    \includegraphics[width=1\linewidth]{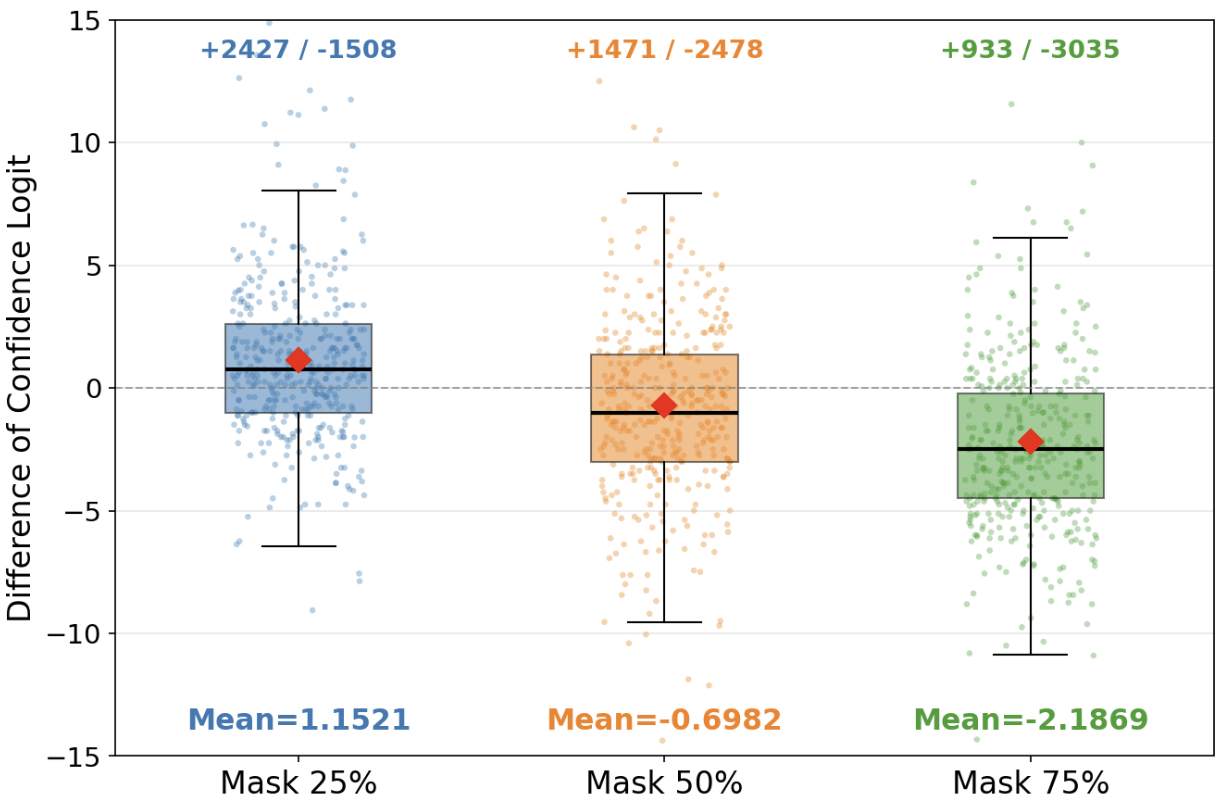}
    \caption{Violin and jitter plots of confidence logit differences before and after training. A smaller mask ratio results in a higher memorization gain for the token post-training.}
    \label{fig:memory_ratio}
\end{figure}

The training process of DLMs distinguishes them from AR language models and thereby introduces unique characteristics. 
In this part, we provide a formal proof characterizing the distributional differences of token-level memorization between member (training) data $\mathbf{x}\in \mathcal{D}_{train}$ and non-member (test) data $\mathbf{x}\in \mathcal{D}_{test}$. 

Let $\mathbf{x}$ be a sequence of tokens $\mathbf{x}=(w_1,\dots,w_L)$ in the training set $\mathcal{D}_{train}$. 
We consider a diffusion language model $\theta_K$ trained for $K$ epochs on $\mathcal{D}_{train}$, and $\theta_0$ denotes the original version that has not yet been trained.  
Since the model's memorization of training data gradually accumulates as the training progresses, we define the memorization gain $\mathcal{G}_{w_i}$ during the training process as the cumulative reduction in loss. 
For the $i$-th token $w_i$ in sequence $\mathbf{x}$, we have:
\begin{equation}
    \mathcal{G}_{w_i} := \mathcal{L}(w_i; \theta_0) - \mathcal{L}(w_i; \theta_K) = \sum_{k=1}^K \delta_{w_i,k},
\end{equation}
where the $\delta_{w_i,k}$ denotes the memory gain (i.e. marginal loss reduction) for token $w_i$ at epoch $k$.

\topic{Member set analysis.}
For the member sample $\mathbf{x}\in \mathcal{D}_{train}$, a global mask rate $\beta_k \sim U(0,1)$ is sampled at the training epoch $k$.
Let $m_{w_i,k} \in \{0,1\}$  denotes the binary indicator whether token $w_i$ is masked (and thus trained) at epoch $k$ \footnote{For clarity, we do not consider the impact of parameter updates during training on other masked tokens. In practice, parameter changes from training on other masked tokens may introduce noise that is unbiased on average, and thus does not affect our subsequent conclusions.
}:
$$P(m_{w_i,k}=1 \mid \beta_k) = \beta_k.$$
In the diffusion training progress, we find the increase in token memorization is associated with the masking ratio.
Insightfully, low-$\beta$ steps may lead to more concentrated gradient updates, resulting in stronger single-step memory gain for $w_i$.
Hence, we propose the following assumption:
\begin{assumption}(Inverse Mask-Ratio Scaling)
\label{theorem:assumption1}
We posit that the magnitude of the memory gain, when a token is selected, is a function of the mask rate:
$$\delta_{w_i,k} = m_{w_i,k} \cdot \phi(w_i, \beta_k),$$
where $\phi: (0,1) \to \mathbb{R}^+$ denotes the update strength function reflected by the loss and is strictly monotonically decreasing. 
\end{assumption}
To validate \cref{theorem:assumption1}, we fine-tuned LLaDA~\cite{nie2025large} on the XSUM \cite{narayan2018don} dataset for four epochs. During training, we set the mask ratio to three fixed values, 0.25, 0.50, and 0.75, with a fixed mask template for each data sample, instead of using a random mask ratio.
After training, we utilize the prediction confidence to estimate the memorization of individual tokens. 
It is evident from \cref{fig:memory_ratio} that a lower mask ratio leads to more significant token memorization on average under the same number of training epochs. 
We leave more experiments in \cref{sec:more_inverse} for validation.
Therefore, for a member datapoint $x$, the single-step gain $\delta_{w_i,k}$ obviously follows a zero-inflated mixture distribution.
We then derive the shape properties of the cumulative gain $\mathcal{G}$. 
\begin{theorem}
\label{th:pos_skew}
(Proof in \cref{appendix:prf})
Let $\mathcal{G}$ be the memorization gain for token-level in the training set. For any finite number of epochs $K \ge 1$, the distribution of $\mathcal{G}_{w_i}$ is strictly right-skewed:
\begin{equation}
\label{eq:skewness}
    \text{Skew}(\mathcal{G}_{w_i}) = \frac{\mathbb{E}\left[\left(\mathcal{G}_{w_i} - \mathbb{E}[\mathcal{G}_{w_i}]\right)^3\right]}{\sigma(\mathcal{G}_{w_i})^3} > 0
\end{equation}

\end{theorem}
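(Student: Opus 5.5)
The plan is to reduce the statement to a moment inequality for a single epoch and then lift it to $K$ epochs with cumulants. Under \cref{theorem:assumption1}, and treating the per-epoch mask rates $\beta_k\sim U(0,1)$ and masking indicators as independent across epochs, we have $\mathcal{G}_{w_i}=\sum_{k=1}^K X_k$, where the $X_k=m_{w_i,k}\,\phi(w_i,\beta_k)$ are i.i.d.\ copies of a single random variable $X$. Second and third cumulants are additive over independent summands, so $\kappa_2(\mathcal{G}_{w_i})=K\kappa_2(X)$ and $\kappa_3(\mathcal{G}_{w_i})=K\kappa_3(X)$. Hence $\text{Skew}(\mathcal{G}_{w_i})=\text{Skew}(X)/\sqrt{K}$, and for every finite $K\ge 1$ it suffices to prove $\kappa_3(X)=\mathbb{E}[(X-\mathbb{E}X)^3]>0$, together with $\text{Var}(X)>0$ so that the skewness is well defined.

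Next I will describe $X$ as a zero-inflated variable. Marginally $P(m=1)=\mathbb{E}[\beta]=1/2$. Conditional on $m=1$, the posterior of $\beta$ has density $2\beta$ on $(0,1)$. So $X\overset{d}{=}B\cdot Y$, where $B\sim\text{Bernoulli}(1/2)$ is independent of $Y:=\phi(w_i,\tilde\beta)$ and $\tilde\beta$ has density $2\beta$. Since $\phi>0$, we have $Y>0$. Since $\phi$ is strictly monotone and $\tilde\beta$ has a nondegenerate continuous law, $Y$ is non-constant, so $s:=\mathbb{E}[Y^2]>m^2$ with $m:=\mathbb{E}[Y]$. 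I will assume $\mathbb{E}[Y^3]<\infty$, which the theorem implicitly requires for the skewness to exist.

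Expanding $\kappa_3(X)=\mathbb{E}[X^3]-3\mu\,\mathbb{E}[X^2]+2\mu^3$ with $\mu=m/2$ and $\mathbb{E}[X^j]=\tfrac12\mathbb{E}[Y^j]$ gives
\[
\kappa_3(X)=\tfrac12\,\mathbb{E}[Y^3]-\tfrac34\,m s+\tfrac14\,m^3 .
\]
The step I expect to be the crux is controlling $\mathbb{E}[Y^3]$, because the skewness of $Y$ itself could be negative for some $\phi$. Rather than analyzing $\phi$ directly, I plan to use only the nonnegativity of $Y$ via Cauchy--Schwarz:
\[
s^2=\mathbb{E}\big[Y^{1/2}Y^{3/2}\big]^2\le \mathbb{E}[Y]\,\mathbb{E}[Y^3],
\qquad\text{so}\qquad \mathbb{E}[Y^3]\ge s^2/m .
\]
Substituting this bound yields
\[
\kappa_3(X)\ge \frac{1}{4m}\big(2s^2-3m^2 s+m^4\big)=\frac{1}{4m}\,(2s-m^2)(s-m^2).
\]
Both factors are strictly positive because $s>m^2>0$, so $\kappa_3(X)>0$. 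Also $\text{Var}(X)=\tfrac12 s-\tfrac14 m^2>0$. Combined with the cumulant scaling from the first step, this gives $\text{Skew}(\mathcal{G}_{w_i})>0$ for every finite $K$.

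Two remarks on the result. First, the skewness decays like $K^{-1/2}$, which is consistent with the restriction to finite $K$. Second, the inverse mask-ratio monotonicity enters only through non-degeneracy of $Y$. The zero-inflation, with mass $1/2$ at zero, is what actually drives the right skew.
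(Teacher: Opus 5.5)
Your proposal is correct and follows the paper's proof essentially step for step. It uses the same zero-inflated decomposition $\delta = J\cdot V$ with $V$ drawn under the tilted density $2\beta$, the same third-central-moment formula, and the same i.i.d.\ lifting across epochs giving the $K^{-1/2}$ scaling. The differences are minor: you close the key inequality with Cauchy--Schwarz ($\mathbb{E}[Y^3]\ge s^2/m$, which gives the factorization $(2s-m^2)(s-m^2)$) where the paper uses Lyapunov's inequality $\mathbb{E}[V^3]\ge(\mathbb{E}[V^2])^{3/2}$ followed by AM--GM, and you state explicitly the finite-third-moment hypothesis that the paper leaves implicit.
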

We further analyze and demonstrate in~\cref{appendix:longtail} that this skewness specifically represents a long-tail distribution.

\topic{Non-member set analysis.}
For the non-member sample $\mathbf{x}\notin \mathcal{D}_{train}$,  $\mathcal{G}_{w_i} = \sum_{k=1}^K \epsilon_{i,k}$, where $\epsilon_{w_i,k}$ represents stochastic generalization noise. 
Without the zero-inflated selection mechanism and the monotonic inverse mask-ratio amplification assumed for members, the resulting distribution typically exhibits much weaker asymmetry.
In practice, this value is often close to zero when the aggregated fluctuations are approximately symmetric:
$\text{Skew}(\mathcal{G}_{w_i}) \approx 0$.
Recalling Eq. (1), we define the following for diffusion language models:
$$\mathcal{L}(w_i; \theta) := \mathbb{E}_{t, \mathbf{x}_0, \mathbf{x}_t} \left[ -\log p_\theta(w_i \mid \mathbf{x}_t) \right],$$
where $\mathbf{x}_t$ denote the noisy sequence of $\mathbf{x}$ at timestep $t$.
We use $\Delta_{w_i}$ as the empirical estimator of the memorization gain $\mathcal{G}_{w_i}$: 
\begin{equation}
{\small
    \begin{split}
        & \Delta_{w_i}(\theta_{\text{tar}}, \theta_{\text{ref}}) := \\ 
        & \underbrace{\mathbb{E}_{t, \mathbf{x}_t} \left[ -\log p_{\theta_{\text{ref}}}(w_i \mid \mathbf{x}_t) \right]}_{\text{Memorization of Reference Model}} - \underbrace{\mathbb{E}_{t, \mathbf{x}_t} \left[ -\log p_{\theta_{\text{tar}}}(w_i \mid \mathbf{x}_t) \right]}_{\text{Memorization of Target Model}}.
    \end{split}
}
\end{equation}
Assuming the reference model approximates the untrained state, i.e. $\theta_{\text{ref}} \approx \theta_0$, and the target model is the trained state, i.e.  $\theta_{\text{tar}} = \theta_K$ after $K$ epoch training, the $\Delta_{w_i}(\theta_{\text{tar}}, \theta_{\text{ref}})$ shares the same distribution properties (\cref{th:pos_skew}).
Hence, the following equation holds:
\[
{\small
\begin{aligned}
   & \text{Skew}(  \{ \Delta_{w_{\text{member},i}}(\theta_{\text{tar}}, \theta_{\text{ref}}) \mid i \in \{1, \dots, L\} \} ) \\
   & > 0 \approx \text{Skew}( \{ \Delta_{w_{\text{non-member},i}}(\theta_{\text{tar}}, \theta_{\text{ref}}) \mid i \in \{1, \dots, L\} \} )
\end{aligned}
}
\]
where $w_{\text{member},i}$ and $w_{\text{non-member}, i}$ denote the tokens of the member set and non-member set samples, respectively. We define:
\[ 
{
\small
\mathbb{I}(\mathbf{x}, \theta_{\text{tar}}, \theta_{\text{ref}}) = 
    \text{Skew}(  \{ \Delta_{w_i}(\theta_{\text{tar}}, \theta_{\text{ref}}) | i \in \{1, \dots, L\} ), 
    }
\]
where $\mathbf{x}= (w_1, \dots, w_L)$. For a given data sample $\mathbf{x}$, a reference model $\theta_{\text{ref}}$, and a target model $\theta_{\text{tar}}$, the indicator function $\mathbb{I}(\mathbf{x}, \theta_{\text{tar}}, \theta_{\text{ref}})$ serves as a metric for membership inference.
Intuitively, if $\mathbb{I}(\mathbf{x}, \theta_{\text{tar}}, \theta_{\text{ref}})$ exceeds a threshold $\tau$, the sample $\mathbf{x}$ is likely from the member set; otherwise, it belongs to the non-member set.

\subsection{Quantile-weighted Skewness with Cyclic Sampling}

\begin{table*}[t]
  \centering
  \caption{The main experiment results of membership inference on diffusion language models on domain fine-tuning tasks (DFT). We bold the best results, and our method is marked in grey.}
  \resizebox{\linewidth}{!}{
    \begin{tabular}{lccccccccc}
    \toprule
          & \multicolumn{3}{c}{Arxiv (DFT)} & \multicolumn{3}{c}{WikiText (DFT)} & \multicolumn{3}{c}{XSUM (DFT)} \\
          \cmidrule(lr){2-4} \cmidrule(lr){5-7} \cmidrule(lr){8-10}
          & AUC   & TPR@10\%FPR & TPR@1\%FPR & AUC   & TPR@10\%FPR & TPR@1\%FPR & AUC   & TPR@10\%FPR & TPR@1\%FPR \\
    \midrule
    Loss  &   61.3    &   16.4    &  5.0     &   60.3    & 17.1      & 2.0    &   61.1    &  15.8     & 1.7 \\
    Min-K\%  & 64.5 & 24.1  & 2.7   &    62.5    &   24.0    & 8.7   & 65.8 & 29.0  & 4.8  \\
    Min-K\%++ & 66.4 & 29.0  & 1.3   &      56.9  &   16.2    & 2.3     & 61.2 & 27.3  & 12.0 \\
    Calibration & 59.1 & 22.7  & 5.3   &    56.9   &  19.7     &   2.0    & 56.4 & 16.8  & 2.0 \\
    $\text{SecMI}_\text{DLM}$ &  58.0     &  18.8     &   2.0    &   62.4     &  26.0     & 1.7 &   64.8   &   31.7    & 17.3 \\
    SAMA  & 70.2 & 34.5  & 17.6  &      70.3    &    36.4   &  17.2   & 71.0 & 40.5  & 18.2 \\
\rowcolor[gray]{0.9} \textbf{Q-Skew} & \textbf{83.4} & \textbf{66.1} & \textbf{50.6} &      \textbf{80.3}     & \textbf{57.7} & \textbf{39.7}    & \textbf{80.7} & \textbf{63.6} & \textbf{47.2} \\
    \bottomrule
    \end{tabular}%
  }
  \label{table:main_exp_dft}
\end{table*}
\begin{table*}[t]
  \centering
  \caption{The main experiment results of membership inference on diffusion language models on instruction fine-tuning (IFT) tasks. We highlight the important values as \cref{table:main_exp_dft}.}
  \resizebox{\linewidth}{!}{
    \begin{tabular}{lccccccccc}
    \toprule
          & \multicolumn{3}{c}{MedQA (IFT)} & \multicolumn{3}{c}{Alpaca (IFT)} & \multicolumn{3}{c}{Tulu-3 (IFT)} \\
          \cmidrule(lr){2-4} \cmidrule(lr){5-7} \cmidrule(lr){8-10}
          & AUC   & TPR@10\%FPR & TPR@1\%FPR & AUC   & TPR@10\%FPR & TPR@1\%FPR & AUC   & TPR@10\%FPR & TPR@1\%FPR \\
    \midrule
    Loss  &   57.3    &   18.8    &  2.7     &   57.4    &  14.7     &  1.0     &   54.9   & 12.0      & 2.3 \\
    Min-K\% & 60.7 & 16.2  & 3.0   & 67.3 & 27.6  & 3.0   &   57.2    &   11.3    & 3.3 \\
    Min-K\%++& 49.7 & 8.7  & 2.3   & 59.6 & 21.2  & 1.7   &    55.1  &    13.0    & 2.7 \\
    Calibration & 55.6 & 17.1  & 1.0   & 61.7 & 20.3  & 5.0   &   60.1    &   \textbf{15.9}    & 2.6 \\
    $\text{SecMI}_\text{DLM}$ &  54.0     &  11.7     &   2.3    &    51.2   &   15.0    & 4.7      &  56.8     &  12.3     & 3.3 \\
    SAMA  & 65.1 & 26.0  & 3.2  & 69.1 & 31.0  & 11.2  &   59.7    &    12.1   &  3.7 \\
  \rowcolor[gray]{0.9}  \textbf{Q-Skew} & \textbf{72.8} & \textbf{31.6} & \textbf{6.7}   & \textbf{81.4} & \textbf{61.4} & \textbf{41.6} &   \textbf{67.2}   &  13.7     &  \textbf{4.3} \\
    \bottomrule
    \end{tabular}%
  }
  \label{table:main_exp_ift}
\end{table*}


\begin{algorithm}[t]
\caption{Quantile-weighted Skewness with Cyclic Sampling}
\label{alg:qskew_cyclic_mia}
\begin{algorithmic}
\small
\State \textbf{Input:} sample $\mathbf{x}=(w_1,\ldots,w_L)$, target model $p_{\theta_{\rm tar}}$, reference model $p_{\theta_{\rm ref}}$, cyclic rounds $R$, mask ratio $\rho$, bandwidth $h$, decision threshold $\tau_{\rm mia}$
\State \textbf{Output:} membership prediction $\hat{z}$ and inference score $a_{\mathbf{x}}$
\State $\mathcal{S}_{\mathbf{x}} \gets \varnothing$
\State $b \gets \max(1,\lfloor \rho L \rfloor)$
\For{$r=1,\ldots,R$}
    \State $\pi^{(r)} \gets \mathrm{Permutation}(\{1,\ldots,L\})$
    \State Partition $\pi^{(r)}$ into disjoint subsets $\{\mathcal{M}_{r,1},\ldots,\mathcal{M}_{r,m}\}$ with $|\mathcal{M}_{r,j}| \approx b$
    \For{$j=1,\ldots,m$}
        \State $\mathcal{M} \gets \mathcal{M}_{r,j}$
        \State Construct the unmasked context $\mathbf{x}_{\mathcal{M}^c}$
        \For{each $i \in \mathcal{M}$}
            \State $\ell^{\rm ref}_{i,\mathcal{M}} \gets -\log p_{\theta_{\rm ref}}(w_i \mid \mathbf{x}_{\mathcal{M}^c})$
            \State $\ell^{\rm tar}_{i,\mathcal{M}} \gets -\log p_{\theta_{\rm tar}}(w_i \mid \mathbf{x}_{\mathcal{M}^c})$
            \State $\Delta_{i,\mathcal{M}} \gets \ell^{\rm ref}_{i,\mathcal{M}} - \ell^{\rm tar}_{i,\mathcal{M}}$
            \State $\mathcal{S}_{\mathbf{x}} \gets \mathcal{S}_{\mathbf{x}} \cup \{\Delta_{i,\mathcal{M}}\}$
        \EndFor
    \EndFor
\EndFor
\State $\tilde{\mathcal{S}}_{\mathbf{x}} \gets \mathrm{Median}(\mathcal{S}_{\mathbf{x}})$
\State $\sigma_{\mathbf{x}} \gets \mathrm{Std}(\mathcal{S}_{\mathbf{x}})$
\For{each $s \in \mathcal{S}_{\mathbf{x}}$}
    \State $F(s) \gets \mathrm{ECDF}(s;\mathcal{S}_{\mathbf{x}})$
    \State Compute $W(s)$ using Eq.~\eqref{eq:qskew_weight}
\EndFor
\State $\bar{W} \gets \frac{1}{|\mathcal{S}_{\mathbf{x}}|}\sum_{s\in\mathcal{S}_{\mathbf{x}}} W(s)$
\State $a_{\mathbf{x}} \gets 
\frac{
\frac{1}{|\mathcal{S}_{\mathbf{x}}|}\sum_{s\in\mathcal{S}_{\mathbf{x}}} W(s)\cdot(s-\tilde{\mathcal{S}}_{\mathbf{x}})
}{
\sigma_{\mathbf{x}}\cdot \bar{W}
}$
\State $\hat{z} \gets \mathbbm{1}[a_{\mathbf{x}} > \tau_{\rm mia}]$
\State \Return $\hat{z}, a_{\mathbf{x}}$
\end{algorithmic}
\end{algorithm}


In practice, we 
(1) calculate the loss difference for every individual mask sampling instead of the mathematical expectation, increasing the set size for distribution estimation;
(2) design a cyclic sampling method to replace the random Monte Carlo sampling of $t$, eliminating the randomness in scenarios with a low number of samples, ensuring coverage uniformity;
(3) design a more robust metric based on \cref{eq:skewness},  to distinguish between the member set and the non-member set. We provide the entire MI progress in \cref{alg:qskew_cyclic_mia}.

\noindent\textbf{Token-level Memorization Score.}\
Let $\mathbf{x} = (w_1, \dots, w_L)$ be an input sequence. For any mask subset $\mathcal{M} \subset \{1, \dots, L\}$ sampled from the masking corruption process,
we define the instance loss difference $\Delta_{w_i, \mathcal{M}}$ for a masked token $w_i$ ($i \in \mathcal{M}$) as:
{
\begin{equation}
\label{eq:token_level_mem}
\begin{aligned}
    \Delta_{i, \mathcal{M}} := 
         & \left[ -\log p_{\theta_{\text{ref}}}(w_i \mid \mathbf{x}_{\mathcal{M}^c}) \right] \\
        & - \left[ -\log p_{\theta_{\text{tar}}}(w_i \mid \mathbf{x}_{\mathcal{M}^c}) \right],
\end{aligned}
\end{equation}
}\noindent
where $\mathbf{x}_{\mathcal{M}^c}$ denotes the context provided by unmasked tokens.

\noindent\textbf{Cyclic Sampling.}
To mitigate the stochastic variance inherent in Monte Carlo sampling and ensure uniform token coverage, we employ a Cyclic Sampling strategy. Instead of random replacement, we perform $R$ full-coverage round sampling. In the $r$-th rounds, we generate a random permutation of indices $\pi^{(r)} = \text{Permutation}(\{1, \dots, L\})$. This permutation is partitioned into a sequence of disjoint batches $\mathbf{B}^{(r)} = \{\mathcal{M}_{r,1}, \dots, \mathcal{M}_{r, m}\}$, such that $\bigcup_{j} \mathcal{M}_{r,j} = \{1, \dots, L\}$ and $\mathcal{M}_{r,j} \cap \mathcal{M}_{r,j'} = \emptyset$, where each batch has size $|\mathcal{M}| \approx \rho L$. The final memorization set aggregates the scores from all disjoint batches across all cycles:
\begin{equation}
    \mathcal{S}_{\mathbf{x}} = \bigcup_{r=1}^{R} \bigcup_{j=1}^{m} \left\{ \Delta_{i, \mathcal{M}_{r,j}} \mid i \in \mathcal{M}_{r,j} \right\}.
\end{equation}
\noindent\textbf{Quantile-weighted Skewness.}
In practice, extreme outliers can significantly inflate the mean of the entire set, making it difficult to distinguish between member and non-member samples. Therefore, we consider the median-based metric following Pearson's Second Skewness Coefficient:
\begin{equation} 
\frac{
\mu(\mathcal{S}_\mathbf{x}) - \tilde{\mathcal{S}_\mathbf{x}}
}{\sigma(\mathcal{S}_\mathbf{x})
},
\end{equation}
where $\tilde{\mathcal{S}_\mathbf{x}}$ denotes the median of $\mathcal{S}_\mathbf{x}$.
Since different quantiles exhibit varying impacts on membership inference accuracy, we further define the \textit{Quantile-weighted Skewness} (\shortname):
\begin{equation}
\begin{aligned}
    \mathcal{A'}(x, \theta_{\text{tar}}, \theta_{\text{ref}})  := & \\
   f_{\text{QSkew}}(\mathbf{x})  = & \frac{\frac{1}{|\mathcal{S}_\mathbf{x}|} \sum_{s \in \mathcal{S}_\mathbf{x}} W(s) \cdot (s - \tilde{\mathcal{S}_\mathbf{x}})}{\sigma(\mathcal{S}_\mathbf{x}) \cdot \bar{W}},
\end{aligned}
\end{equation}
where $\bar{W}$ is a normalization factor: 
$$\bar{W} = \frac{1}{|\mathcal{S}_\mathbf{x}|} \sum_{s \in \mathcal{S}_\mathbf{x}} W(s).$$
To mitigate the influence of outliers and focus on informative regions, we formulate the weight function $W(s)$ using the empirical cumulative distribution function (ECDF):
\begin{equation}
\label{eq:qskew_weight}
\scalebox{0.9}{$
    W(s) = \exp\left(-\frac{(F(s) - 0.15)^2}{2h^2}\right) + \exp\left(-\frac{(F(s) - 0.85)^2}{2h^2}\right),
$}
\end{equation}
where $F(s) \in [0, 1]$ represents the quantile position of $s$ within $\mathcal{S}_\mathbf{x}$, and $h$ is a bandwidth hyperparameter (set to 0.1 in experiment). By anchoring weights at the 15th and 85th percentiles, this metric captures distributional asymmetry while remaining robust to long-tailed distributions.

\section{Experiments}
\label{sec:evaluation}

\topic{Models and Datasets.}
To validate the generalizability of our method, we broadly consider mainstream open-source DLMs, including LLaDA-8B-Base~\cite{llada_8b_base}, LLaDA-8B-Instruct~\cite{llada_8b_instruct}, Dream-Base-7B~\cite{Dream_v0_Base_7B}, and Dream-Instruct-7B~\cite{Dream_v0_Instruct_7B}, under two fine-tuning settings: domain fine-tuning (DFT) and instruction fine-tuning (IFT), as described in \cref{subsec:threat}. 
We also broadly consider six datasets from diverse domains. 
Specifically, for (1) domain finetuning (DFT), we finetune the Base DLMs with ArXiv \cite{realtimedata_arxiv_alltime}, WikiText \cite{realtimedata_wikitext_alltime}, and XSUM \cite{narayan2018don} datasets.
And for (2) instruction finetuning (IFT), we finetune the Instruct DLMs with MedQA \cite{li2023chatdoctor}, Alpaca \cite{alpaca}, and Tulu-3 \cite{lambert2024tulu} datasets.

Note that we do not consider MIMIR \citep{mimir_dataset} used in \cite{chen2026membership}, because the \textit{training (member) and test (non-member) sets in MIMIR already exhibit a prior discrepancy} within the original model, making it unsuitable for evaluating fine-tuning scenarios. 
We detail the validation in \cref{sec:hallu}.

\topic{Baselines.}
We consider a broad range of membership inference baselines from the following sources: 
\ding{182} adaptations of methods targeting traditional auto-regressive language models: Loss~\cite{yeom2018privacy}, Min-K\%~\cite{shi2023detecting}, Min-K\%++~\cite{zhang2024min}, Calibration~\cite{watson2021importance}; 
\ding{183} adaptations of methods targeting (vision) diffusion models\footnote{Note that some baselines~\cite{kong2024an,zhai2024membership} are not suitable for adaptation to DLMs that are not included. For instance, the theoretical derivations of PIA~\cite{kong2024an} are designed for continuous diffusion and differ fundamentally from DLMs. 
}: SecMI~\cite{duan2023diffusion};
and \ding{184} one concurrent work: SAMA~\cite{chen2026membership}. 
We detail the rationale for baseline selection and the adaptation design from their original target models to DLMs in Appendix \ref{appd:baselines}.

\topic{Metrics.}
We consider the widely used evaluation metrics, including the area under the receiver operating characteristic curve (AUC), the True Positive Rate when the False Positive Rate of 10\% and 1\% (TPR@10\%FPR, TPR@1\%FPR), following previous works~\cite{carlini2022membership,duan2023diffusion,fu2024membership,chen2026membership}.

\topic{Implementation.}
We conduct three independent runs and report the average of each metric to reduce random error.
We uniformly train for 6 epochs with a batch size of 16. We also consider other training settings in \cref{sec:epochs}.
For all methods, we apply 16 denoising steps per sample to ensure consistent computational complexity.
When calculating the token-level memorization score, we set $\rho=0.35$ for calculating \cref{eq:token_level_mem}, as this noise level maximizes the distinction between the member and non-member sets.  We also conduct ablation studies to show the effectiveness of each component of our method in \cref{sec:ablation}.

\subsection{Main Results}
\label{subsec:effectiveness-mi}
We evaluate our method (denoted as \shortname) and the baselines on LLaDA-8B-Base~\cite{llada_8b_base} and LLaDA-8B-Instruct~\cite{llada_8b_instruct} across various datasets and report the results in \cref{table:main_exp_dft} and \cref{table:main_exp_ift}.
Experimental results show that our method consistently outperforms the baselines across different datasets and DLMs, and training settings, including domain fine-tuning and instruction fine-tuning.

\begin{table}[t]
  \centering
  \caption{The MI performance on XSUM with Dream-Base-7B for domain fine-tuning and on Tulu-3 with Dream-7B-Instruct for instruction fine-tuning. We highlight the important values as \cref{table:main_exp_dft}.}
  \resizebox{\linewidth}{!}{
    \begin{tabular}{lcccccc}
    \toprule
          & \multicolumn{3}{c}{XSUM (DFT)} & \multicolumn{3}{c}{Tulu-3 (IFT)} \\
          \cmidrule(lr){2-4} \cmidrule(lr){5-7}
          & AUC & TPR@10\% & TPR@1\% & AUC & TPR@10\% & TPR@1\% \\
    \midrule
    Loss & 55.0 & 10.4 & 2.2 & 52.7 & 10.2 & 2.1 \\
    MinK & 57.1 & 12.0 & 2.7 & 53.6 & 10.7 & 2.5 \\
    MinK++ & 56.4 & 11.2 & 3.0 & 51.8 & 8.9 & 2.0 \\
    Calibration & 53.6 & 9.5 & 2.1 & 55.1 & 12.0 & 2.7 \\
    $\text{SecMI}_\text{DLM}$ & 57.5 & 12.8 & 3.5 & 53.4 & 9.8 & 2.3 \\
    SAMA & 60.7 & 14.5 & \textbf{4.4} & 54.8 & 11.1 & 2.8 \\
    \rowcolor[gray]{0.9} \textbf{Q-Skew} & \textbf{63.4} & \textbf{15.0} & 4.2 & \textbf{59.1} & \textbf{12.7} & \textbf{3.1} \\
    \bottomrule
    \end{tabular}%
  }
  \label{table:main_dream}
\end{table}

Besides the LLaDA series, we conduct additional evaluations using the Dream-7B series models, including Dream-Base-7B~\cite{Dream_v0_Base_7B} and Dream-Instruct-7B~\cite{Dream_v0_Instruct_7B} on XSUM~\cite{narayan2018don} for domain fine-tuning and Tulu-3~\cite{lambert2024tulu} for instruction fine-tuning, respectively.
The results show that our method still achieves the best overall performance among all baselines.
Note that, compared with the LLaDA series models, the Dream series models exhibit lower vulnerability in membership inference. A similar observation is also reported in concurrent work~\cite{chen2026membership}.

\subsection{Performance on Various Training Epochs}
\label{sec:epochs}
\begin{figure}[t]
    \centering
    \includegraphics[width=1\linewidth]{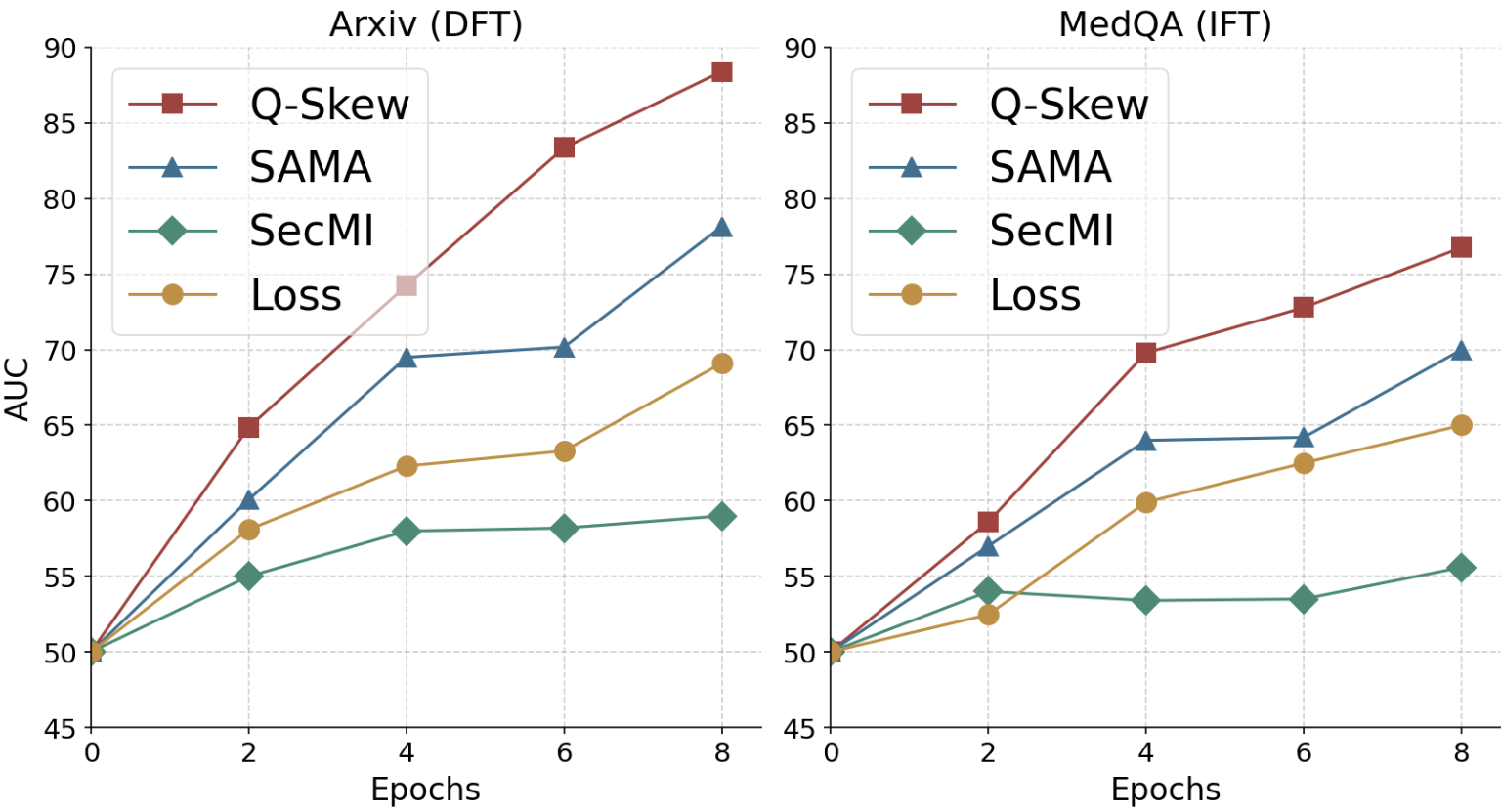}
    \caption{Effectiveness under different training epochs.}
    \label{fig:auc_epoch}
\end{figure}
As fine-tuning progresses, the model's memorization of the training data gradually increases. In practical scenarios, the number of fine-tuning steps by the model trainer varies. Membership inference methods that more effectively reveal membership information across different steps are considered superior~\cite{zhai2024membership,lian2025unveiling}. 
We evaluate the effectiveness of the membership inference method across different training epochs in \cref{fig:auc_epoch}. Experimental results demonstrate that our method consistently outperforms the baselines across different training steps.

\subsection{Weaker Assumption of Reference Models}
\label{sec:weaker}

\begin{table}[t]
  \centering
  \caption{Performance of membership inference methods under a weaker assumption: using misaligned reference models}
  \label{tab:mis_align}

\setlength{\tabcolsep}{10pt}
  
    \resizebox{\linewidth}{!}{
  \begin{tabular}{lccc}
    \toprule
    & \multicolumn{3}{c}{Base DLM} \\
    \cmidrule(l){2-4}
    & AUC & TPR@10 & TPR@1 \\
    \midrule
    Golden Reference & 83.4 & 66.1 & 50.6 \\
    \midrule
    Instruction Reference & \textbf{78.4} (\textcolor{blue}{-5.0})  & 55.5 (\textcolor{blue}{-10.7}) & \textbf{45.5} (\textcolor{blue}{-5.1}) \\
    Para-modification & 78.4 (\textcolor{blue}{-5.0}) & \textbf{60.0} (\textcolor{blue}{-6.2}) & 45.0 (\textcolor{blue}{-5.6}) \\
    SAMA (w/ Ref)  & 70.2  & 34.4  & 17.6  \\
    \bottomrule
  \end{tabular}
  }
  
  \vspace{1em} 
    
\resizebox{\linewidth}{!}{
  \begin{tabular}{lccc}
    \toprule
    & \multicolumn{3}{c}{Instruction DLM} \\
    \cmidrule(l){2-4}
    & AUC & TPR@10 & TPR@1 \\
    \midrule
    Golden Reference &  72.8 & 31.6 & 6.7 \\
    \midrule
    Base Reference & \textbf{71.5} (\textcolor{blue}{-1.3}) & \textbf{32.0} (\textcolor{red}{+0.4}) & \textbf{6.0} (\textcolor{blue}{-0.7}) \\
    Para-modification & 70.0 (\textcolor{blue}{-2.8}) & 25.0 (\textcolor{blue}{-6.6}) & 2.5 (\textcolor{blue}{-4.2})  \\
    SAMA (w/ Ref)  & 65.1 & 26.0 & 3.2  \\
    \bottomrule
  \end{tabular}

  }
  
\end{table}

Although it is reasonable to assume that the MI adversary can access an opensource unfinetuned reference model in the finetuning setting, we additionally consider the effectiveness of our method when the used reference model is misaligned.
We consider the following three alternative strategies when the unfinetuned model corresponding to the target model is unavailable: 
(1) Use an unfinetuned base model as a substitute for the unfinetuned instruction model.
(2) Use an unfinetuned instruction model as a substitute for the unfinetuned base model.
(3) Use a model that has been fine-tuned on other perturbed data as a substitute for the unfinetuned model.

We test the situation of domain fine-tuning on the ArXiv dataset and instruction fine-tuning on the MedQA dataset. In \cref{tab:mis_align}, while the membership inference performance of our method exhibits a slight decline when the reference model is misaligned, this decrease is minor, and our approach still outperforms the best baseline. We provide the justification of the assumption that the adversary can access a reference model, although it may not be fully aligned with the target model, in Appendix~\ref{appd:justify}.

\subsection{Ablation Studies}
\label{sec:ablation}
\begin{table}[t]
  \centering
  \caption{Performance of ablation studies on the domain finetuning task.}
    \resizebox{\linewidth}{!}{

  \begin{tabular}{lccc}
    \toprule
     & AUC & TPR@10\%FPR & TPR@1\%FPR \\
    \midrule
    Q-Skew & 80.3 & 57.7 & 39.7 \\
    \midrule
    w/o Cyclic Sampling & 78.5 & 53.0 & 37.0 \\
    w/o Skewness Information & 77.6 & 54.3 & 34.0 \\
    w/o quantile weighting & 78.2 & 55.0 & 33.4 \\
    \bottomrule
  \end{tabular}
  }
  \label{tab:ablation}
\end{table}
We conduct an ablation study on the domain fine-tuning task using WikiText~\cite{realtimedata_wikitext_alltime}.
We consider the following settings:
(1) without the \textbf{cyclic sampling} strategy.
(2) replacing the \textbf{skewness} calculation with the mean value.
(3) using the skewness metric without \textbf{quantile weighting}.
The experimental results in \cref{tab:ablation} show that each component in the method is effective.

\section{
Increase in Privacy Risks Beyond MIAs}
\label{sec:pii-reconstruction}

Previously, we showed that our skewness metric 
improves membership inference attacks.
We now consider a complementary privacy risk 
in generative language models: 
\emph{data reconstruction attacks}%
~\cite{carlini2018secret, carlini2023quantifying, hayes2024measuring, biderman2023emergent, lee2022deduplicating, ippolito2023preventing, wang2026silent}.
Specifically, we investigate whether \shortname, 
designed to capture the distributional difference 
between seen and unseen tokens---%
can be 
effective in 
extracting personally identifiable information (PII).

\subsection{Threat Model}
\label{subsec:threat-model}

We consider a PII reconstruction adversary%
~\cite{lukas2023analyzing, meng2025rr, nakka2024pii, kim2023propile}
who is given a training record with PII redacted 
and aims to recover the redacted tokens 
via queries to the target model.
For example, given the PII-redacted record
\texttt{``Insurance 
Member ID: [REDACTED]''},
the objective is to reconstruct the true value
\texttt{``XJ45-7831-92''}.
The adversary can generate \emph{candidate PII values} 
and \emph{rank} them. 

\subsection{PII Reconstruction for DLMs}
\label{subsec:pii-recon-dlms}

PII reconstruction attacks have been widely studied for autoregressive language models,
where attackers query the model and identify memorized sequences
via high conditional probabilities.
%
However, these approaches do \emph{not} directly apply to DLMs.
(1) DLMs generate tokens through iterative denoising 
rather than left-to-right prediction, and thus 
do not expose conditional probabilities of the form $P(x_t \mid x_{<t})$.
(2) Prior attacks rely on directly prompting 
models to reproduce memorized sequences,
whereas DLMs reconstruct tokens 
within partially noisy contexts, 
making such generation-based extraction less reliable.
\smallskip

\topic{Candidate-based PII Extraction.}
To adapt existing approaches, we 
formulate 
PII extraction as a candidate ranking problem.
Given a context containing a masked PII record 
and candidate PII items $\{z_1, z_2, \dots, z_k\}$, 
the attacker aims to identify the most likely memorized candidate.
Candidates are generated via random remasking 
across $N=64$ independent denoising process.
Each candidate $z_i$ is inserted into the masked position,
and its plausibility is estimated by computing the cross-entropy loss $l_{CE}$
under stochastic masking.
This likelihood-based ranking serves as a baseline.
\smallskip

\topic{Incorporating Our Skewness Score.}
Previous likelihood-based ranking provides a strong baseline 
for PII extraction by measuring the reconstruction loss of candidate completions.
However, this approach primarily reflects the overall sequence likelihood
under the model's \emph{language prior},
which can cause different candidates to appear similarly plausible.
For example, PII records such as \texttt{``john.doe@gmail.com''}
and \texttt{``alice.bob@gmail.com''} may both receive similar $l_{CE}$ values
because common tokens like \texttt{``gmail.com''} 
are strongly supported by the language prior.
As a result, the ranking may be dominated by 
generic token probabilities rather than signals of memorization.
Our skewness score (token-level asymmetry) in contrast,
captures localized reconstruction behavior and 
can highlight asymmetries that arise from memorized sequences.
This makes it a \emph{complementary} signal for candidate ranking.
To combine the strengths of both signals, we compute a weighted score:
\begin{align*}
    \text{Score}(z_i) = & -(1-\alpha) \cdot Z(l_{CE}(z_i)) \\
                 &+ (\alpha) \cdot Z(\text{Skew}_{\text{Pearson2}}(z_i))
\end{align*}
where $Z$ denotes z-score normalization.
In practice, an adversary may have access to 
a few training records and use them to estimate an 
effective $\alpha$. 
We show the impact of $\alpha$ in Appendix~\ref{appendix:recon-abl}.
To better capture localized memorization effects, 
we compute skewness over a contextual span of length $X$
centered on the PII record.
For example, when $X\!=\!20$, the span includes 
8 preceding 
tokens, the 5 PII tokens, and 7 following tokens.
For each candidate sequence, we generate masked variants 
by randomly masking 35\% of tokens. 
%
We repeat this process for 4--16 batches.
Token-level loss differences are computed across these masked variants 
and aggregated to estimate the final score.

\subsection{Evaluation}
\label{subsec:eval}

We evaluate our attacks on two types of PII:
\emph{phone} numbers and \emph{email} addresses.
%
We use the TREC dataset~\cite{wu2006an},
which contains 174,299 records, split into 
78k, 78k, and 17k examples for training, validation, and testing, respectively.
Consistent with our membership inference setup,
we fine-tune the LLaDA-8B-Base model on the training set.
We identify PII instances 
applying regular expressions that match email addresses and phone numbers,
and randomly sample 50 unique email addresses and 50 unique phone numbers as targets.
The attacker generates candidate PII
by inserting 20 mask tokens at the target location within a record and 
running 64 independent diffusion processes to fill them,
then selects the most likely candidate 
using the attack signals defined in \S\ref{subsec:pii-recon-dlms}.

\topic{Metrics and Baselines.} For our baseline, we use cross entropy loss as the candidate-scoring mechanism. 
We compute the cross-entropy loss over the candidate PII tokens for each candidate in a set, and then rank them. 
Our attack success metric, top-1 accuracy, measures the proportion of targets in which the ground-truth PII achieves the highest rank.
Following prior work, we report top-1 accuracy as a value between 0 and 100.
In the case of cross-entropy loss we expect the ground-truth PII to have a lower loss compared to the candidates.
In contrast, we expect the ground-truth PII to have a higher skewness relative to other candidates.
\smallskip

\begin{table}[t]
  \centering
  \caption{%
    ASR (\%) of our attack methods on LLaDA-8B-Base
    for phone numbers and email addresses.
    Higher is better.
    The best results on \texttt{Combine-0.1} are in bold.
  }
  \vspace{-0.6em}
  \resizebox{\linewidth}{!}{
    \begin{tabular}{llcc}
    \toprule
    \multirow{2}{*}{\textbf{Our Attacks}} & \multirow{2}{*}{\textbf{Signal}} & \multicolumn{2}{c}{\textbf{ASR} (\%)} \\ \cmidrule(lr){3-4}
     & & \textbf{Phone} & \textbf{Email} \\ \midrule \midrule
    \texttt{Ranking} & CE & 30 [15/50] & 20 [10/50] \\
    \texttt{Mask only PII} & \shortname & 22 [11/50] & 16 [\hspace{0.5em}8/50] \\
    \texttt{Mask only context} & \shortname & 10 [\hspace{0.5em}5/50] & \hspace{0.5em}8 [\hspace{0.5em}4/50] \\
    \texttt{Combine-0.1} & Both & \textbf{34} [17/50] & \textbf{24} [12/50] \\
    \bottomrule
    \end{tabular}%
  }
  \label{tab:pii_recon}
  \vspace{-1.0em}
\end{table}

\topic{Results.}
Table~\ref{tab:pii_recon} 
shows our results.
We compare four 
attacks:
cross-entropy (CE)-based ranking (\texttt{Ranking}),
two skewness-based variants that mask either
only PII tokens (\texttt{Mask only PII})
or only context tokens (\texttt{Mask only context}),
and a combined method (\texttt{Combine-0.1}) that integrates CE and skewness signals.
The skewness-based method names reflect
whether the randomly masked tokens are applied to PII or context tokens.
The value 0.1 denotes the $\alpha$ value used to combine the two
(see Appendix~\ref{appendix:recon-abl} for details 
on how we select $\alpha$).

We show that CE-based ranking serves as a strong baseline, 
achieving 30\% and 20\% ASR on phone numbers and emails. 
Skewness-based methods alone performs worse 
(10--22\% ASR on phone numbers and 8--16\% on emails), 
reflecting that PII reconstruction is primarily a ranking task 
rather than pure inference.
However, combining CE with skewness produces the best results, 
improving ASR to 34\% for phone numbers and 24\% for emails.
This suggests that the skewness helps mitigate 
the influence of strong language priors acquired 
during pre-training, which can bias CE-based ranking.
Additional qualitative analysis supporting this claim is in Appendix~\ref{appendix:pii-qual-analysis},
and more ablation studies are in 
Appendix~\ref{appendix:recon-abl} due to space limits.

\section{Conclusion}
\label{sec:conclusion}

In this paper, we first define the \textit{token-level memorization asymmetry}. Based on this, we theoretically derive an MI method for fine-tuned DLMs that outperforms existing baselines across datasets and setups. Furthermore, we demonstrate that our method facilitates PII reconstruction.
Our work indicates that, due to differences in the training process, DLMs exhibit distinct characteristics from autoregressive (AR) LLMs,  encouraging the community to focus on the unique privacy risks of DLMs.

\section*{Limitations}

While our method advances the privacy-violating attacks such as membership inference and PII extraction in diffusion language models, several limitations remain.  First, because publicly available open source diffusion language models that are suitable for training are still limited, we mainly evaluate our method on current mainstream models.
Moreover, diffusion language models are developing rapidly.
More diverse diffusion architectures and training mechanisms may emerge in the future, and the generalization ability of our method to such future models remains unknown.

\section*{Impact Statement}

This work advances our understanding of privacy risks in DLMs,
an emerging class of generative models.
The proposed attack, 
\shortname
, is intended as an auditing tool 
to assess and mitigate privacy risks, 
rather than to facilitate misuse.
We believe that exposing this vulnerability is a necessary step 
toward developing effective privacy defenses 
and informing the responsible deployment of DLMs.
More broadly, this work underscores the importance of 
evaluating privacy guarantees in emerging model architectures 
and encourages the development of privacy-preserving training 
and inference techniques for diffusion-based models.


\bibliography{bibs/thiswork}

@inproceedings{yudifferentially,
  title={Differentially Private Fine-tuning of Language Models},
  author={Yu, Da and Naik, Saurabh and Backurs, Arturs and Gopi, Sivakanth and Inan, Huseyin A and Kamath, Gautam and Kulkarni, Janardhan and Lee, Yin Tat and Manoel, Andre and Wutschitz, Lukas and others},
  booktitle={International Conference on Learning Representations}
}

@article{fu2024membership,
  title={Membership inference attacks against fine-tuned large language models via self-prompt calibration},
  author={Fu, Wenjie and Wang, Huandong and Gao, Chen and Liu, Guanghua and Li, Yong and Jiang, Tao},
  journal={Advances in Neural Information Processing Systems},
  volume={37},
  pages={134981--135010},
  year={2024}
}

@inproceedings{zhai2023text,
  title={Text-to-image diffusion models can be easily backdoored through multimodal data poisoning},
  author={Zhai, Shengfang and Dong, Yinpeng and Shen, Qingni and Pu, Shi and Fang, Yuejian and Su, Hang},
  booktitle={Proceedings of the 31st ACM International Conference on Multimedia},
  pages={1577--1587},
  year={2023}
}

@article{zhai2024membership,
  title={Membership inference on text-to-image diffusion models via conditional likelihood discrepancy},
  author={Zhai, Shengfang and Chen, Huanran and Dong, Yinpeng and Li, Jiajun and Shen, Qingni and Gao, Yansong and Su, Hang and Liu, Yang},
  journal={Advances in Neural Information Processing Systems},
  volume={37},
  pages={74122--74146},
  year={2024}
}

@inproceedings{shokri2017membership,
  title={Membership inference attacks against machine learning models},
  author={Shokri, Reza and Stronati, Marco and Song, Congzheng and Shmatikov, Vitaly},
  booktitle={2017 IEEE symposium on security and privacy (SP)},
  pages={3--18},
  year={2017},
  organization={IEEE}
}

@article{dealcala2024my,
  title={Is my data in your ai model? membership inference test with application to face images},
  author={DeAlcala, Daniel and Morales, Aythami and Fierrez, Julian and Mancera, Gonzalo and Tolosana, Ruben and Ortega-Garcia, Javier},
  journal={arXiv preprint arXiv:2402.09225},
  year={2024}
}

@article{ho2020denoising,
  title={Denoising diffusion probabilistic models},
  author={Ho, Jonathan and Jain, Ajay and Abbeel, Pieter},
  journal={Advances in neural information processing systems},
  volume={33},
  pages={6840--6851},
  year={2020}
}

@inproceedings{
kong2024an,
title={An Efficient Membership Inference Attack for the Diffusion Model by Proximal Initialization},
author={Fei Kong and Jinhao Duan and RuiPeng Ma and Heng Tao Shen and Xiaoshuang Shi and Xiaofeng Zhu and Kaidi Xu},
booktitle={The Twelfth International Conference on Learning Representations},
year={2024},
url={https://openreview.net/forum?id=rpH9FcCEV6}
}

@misc{fu2024probabilisticfluctuationbasedmembership,
      title={A Probabilistic Fluctuation based Membership Inference Attack for Diffusion Models}, 
      author={Wenjie Fu and Huandong Wang and Liyuan Zhang and Chen Gao and Yong Li and Tao Jiang},
      year={2024},
      eprint={2308.12143},
      archivePrefix={arXiv},
      primaryClass={cs.LG},
      url={https://arxiv.org/abs/2308.12143}, 
}

@inproceedings{
watson2022on,
title={On the Importance of Difficulty Calibration in Membership Inference Attacks},
author={Lauren Watson and Chuan Guo and Graham Cormode and Alexandre Sablayrolles},
booktitle={International Conference on Learning Representations},
year={2022},
url={https://openreview.net/forum?id=3eIrli0TwQ}
}

@inproceedings{
chen2026membership,
title={Membership Inference Attacks Against Fine-tuned Diffusion Language Models},
author={Yuetian Chen and Kaiyuan Zhang and Yuntao Du and Edoardo Stoppa and Charles Fleming and Ashish Kundu and Bruno Ribeiro and Ninghui Li},
booktitle={The Fourteenth International Conference on Learning Representations},
year={2026},
url={https://openreview.net/forum?id=oWKJursYpW}
}

@inproceedings{carlini2022membership,
  title={Membership inference attacks from first principles},
  author={Carlini, Nicholas and Chien, Steve and Nasr, Milad and Song, Shuang and Terzis, Andreas and Tramer, Florian},
  booktitle={2022 IEEE symposium on security and privacy (SP)},
  pages={1897--1914},
  year={2022},
  organization={IEEE}
}

@article{lian2025unveiling,
  title={Unveiling impact of frequency components on membership inference attacks for diffusion models},
  author={Lian, Puwei and Cai, Yujun and Li, Songze and Bao, Bingkun},
  journal={arXiv preprint arXiv:2505.20955},
  year={2025}
}

@article{lukas2023analyzing,
  title={Analyzing Leakage of Personally Identifiable Information in Language Models},
  author={Nils Lukas and A. Salem and Robert Sim and Shruti Tople and Lukas Wutschitz and Santiago Zanella-B'eguelin},
  journal={2023 IEEE Symposium on Security and Privacy (SP)},
  year={2023},
  pages={346-363},
  url={https://api.semanticscholar.org/CorpusID:256459554}
}

@inproceedings{nakka2024pii,
    title = "{PII}-Compass: Guiding {LLM} training data extraction prompts towards the target {PII} via grounding",
    author = "Nakka, Krishna Kanth  and
      Frikha, Ahmed  and
      Mendes, Ricardo  and
      Jiang, Xue  and
      Zhou, Xuebing",
    editor = "Habernal, Ivan  and
      Ghanavati, Sepideh  and
      Ravichander, Abhilasha  and
      Jain, Vijayanta  and
      Thaine, Patricia  and
      Igamberdiev, Timour  and
      Mireshghallah, Niloofar  and
      Feyisetan, Oluwaseyi",
    booktitle = "Proceedings of the Fifth Workshop on Privacy in Natural Language Processing",
    month = aug,
    year = "2024",
    address = "Bangkok, Thailand",
    publisher = "Association for Computational Linguistics",
    url = "https://aclanthology.org/2024.privatenlp-1.7/",
    pages = "63--73"
}

@article{nie2025large,
  title={Large language diffusion models},
  author={Nie, Shen and Zhu, Fengqi and You, Zebin and Zhang, Xiaolu and Ou, Jingyang and Hu, Jun and Zhou, Jun and Lin, Yankai and Wen, Ji-Rong and Li, Chongxuan},
  journal={arXiv preprint arXiv:2502.09992},
  year={2025}
}

@article{bie2025llada2,
  title={Llada2. 0: Scaling up diffusion language models to 100b},
  author={Bie, Tiwei and Cao, Maosong and Chen, Kun and Du, Lun and Gong, Mingliang and Gong, Zhuochen and Gu, Yanmei and Hu, Jiaqi and Huang, Zenan and Lan, Zhenzhong and others},
  journal={arXiv preprint arXiv:2512.15745},
  year={2025}
}

@article{ye2025dream,
  title={Dream 7b: Diffusion large language models},
  author={Ye, Jiacheng and Xie, Zhihui and Zheng, Lin and Gao, Jiahui and Wu, Zirui and Jiang, Xin and Li, Zhenguo and Kong, Lingpeng},
  journal={arXiv preprint arXiv:2508.15487},
  year={2025}
}

@article{song2025seed,
  title={Seed diffusion: A large-scale diffusion language model with high-speed inference},
  author={Song, Yuxuan and Zhang, Zheng and Luo, Cheng and Gao, Pengyang and Xia, Fan and Luo, Hao and Li, Zheng and Yang, Yuehang and Yu, Hongli and Qu, Xingwei and others},
  journal={arXiv preprint arXiv:2508.02193},
  year={2025}
}

@misc{gemini_diffusion_2025,
  author       = {{Google DeepMind}},
  title        = {Gemini Diffusion},
  year         = {2025},
  howpublished = {\url{https://deepmind.google/models/gemini-diffusion/}},
  note         = {Accessed: 2026-03-16}
}

@misc{inception_mercury_chat_2025,
  author       = {{Inception Labs}},
  title        = {Introducing Mercury, Our General Chat Diffusion Large Language Model},
  year         = {2025},
  howpublished = {\url{https://www.inceptionlabs.ai/blog/introducing-mercury-our-general-chat-model}},
  note         = {Accessed: 2026-03-16}
}

@inproceedings{de2025accelerated,
  title={Accelerated Diffusion Models via Speculative Sampling},
  author={De Bortoli, Valentin and Galashov, Alexandre and Gretton, Arthur and Doucet, Arnaud},
  booktitle={International Conference on Machine Learning},
  pages={12590--12631},
  year={2025},
  organization={PMLR}
}

@article{li2022diffusion,
  title={Diffusion-lm improves controllable text generation},
  author={Li, Xiang and Thickstun, John and Gulrajani, Ishaan and Liang, Percy S and Hashimoto, Tatsunori B},
  journal={Advances in neural information processing systems},
  volume={35},
  pages={4328--4343},
  year={2022}
}

@inproceedings{yeom2018privacy,
  title={Privacy risk in machine learning: Analyzing the connection to overfitting},
  author={Yeom, Samuel and Giacomelli, Irene and Fredrikson, Matt and Jha, Somesh},
  booktitle={2018 IEEE 31st computer security foundations symposium (CSF)},
  pages={268--282},
  year={2018},
  organization={IEEE}
}

@article{shi2023detecting,
  title={Detecting pretraining data from large language models},
  author={Shi, Weijia and Ajith, Anirudh and Xia, Mengzhou and Huang, Yangsibo and Liu, Daogao and Blevins, Terra and Chen, Danqi and Zettlemoyer, Luke},
  journal={arXiv preprint arXiv:2310.16789},
  year={2023}
}

@article{zhang2024min,
  title={Min-k\%++: Improved baseline for detecting pre-training data from large language models},
  author={Zhang, Jingyang and Sun, Jingwei and Yeats, Eric and Ouyang, Yang and Kuo, Martin and Zhang, Jianyi and Yang, Hao Frank and Li, Hai},
  journal={arXiv preprint arXiv:2404.02936},
  year={2024}
}

@article{watson2021importance,
  title={On the importance of difficulty calibration in membership inference attacks},
  author={Watson, Lauren and Guo, Chuan and Cormode, Graham and Sablayrolles, Alex},
  journal={arXiv preprint arXiv:2111.08440},
  year={2021}
}

@inproceedings{duan2023diffusion,
  title={Are diffusion models vulnerable to membership inference attacks?},
  author={Duan, Jinhao and Kong, Fei and Wang, Shiqi and Shi, Xiaoshuang and Xu, Kaidi},
  booktitle={International Conference on Machine Learning},
  pages={8717--8730},
  year={2023},
  organization={PMLR}
}

@article{ni2025diffusion,
  title={Diffusion language models are super data learners},
  author={Ni, Jinjie and Liu, Qian and Dou, Longxu and Du, Chao and Wang, Zili and Yan, Hang and Pang, Tianyu and Shieh, Michael Qizhe},
  journal={arXiv preprint arXiv:2511.03276},
  year={2025}
}

@inproceedings{meng2025rr,
  title={Rr: Unveiling llm training privacy through recollection and ranking},
  author={Meng, Wenlong and Zhenyuan, Guo and Wu, Lenan and Gong, Chen and Liu, Wenyan and Li, Weixian and Wei, Chengkun and Chen, Wenzhi},
  booktitle={Findings of the Association for Computational Linguistics: ACL 2025},
  pages={17383--17397},
  year={2025}
}

@inproceedings{huang2025df,
  title={Df-mia: A distribution-free membership inference attack on fine-tuned large language models},
  author={Huang, Zhiheng and Liu, Yannan and He, Daojing and Li, Yu},
  booktitle={Proceedings of the AAAI Conference on Artificial Intelligence},
  volume={39},
  number={1},
  pages={343--351},
  year={2025}
}

@inproceedings{narayan2018don,
  title={Don’t give me the details, just the summary! topic-aware convolutional neural networks for extreme summarization},
  author={Narayan, Shashi and Cohen, Shay B and Lapata, Mirella},
  booktitle={Proceedings of the 2018 conference on empirical methods in natural language processing},
  pages={1797--1807},
  year={2018}
}

@inproceedings{carlini2018secret,
  title={The Secret Sharer: Evaluating and Testing Unintended Memorization in Neural Networks},
  author={Nicholas Carlini and Chang Liu and {\'U}lfar Erlingsson and Jernej Kos and Dawn Xiaodong Song},
  booktitle={USENIX Security Symposium},
  year={2018},
  url={https://api.semanticscholar.org/CorpusID:170076423}
}

@inproceedings{wu2006an,
  title={An Exploratory Study of the W3C Mailing List Test Collection for Retrieval of Emails with Pro/Con Argument},
  author={Yejun Wu and Douglas W. Oard and Ian Soboroff},
  booktitle={International Conference on Email and Anti-Spam},
  year={2006},
  url={https://api.semanticscholar.org/CorpusID:10214775}
}

@inproceedings{
carlini2023quantifying,
title={Quantifying Memorization Across Neural Language Models},
author={Nicholas Carlini and Daphne Ippolito and Matthew Jagielski and Katherine Lee and Florian Tramer and Chiyuan Zhang},
booktitle={The Eleventh International Conference on Learning Representations },
year={2023},
url={https://openreview.net/forum?id=TatRHT_1cK}
}

@inproceedings{
kim2023propile,
title={Pro{PILE}: Probing Privacy Leakage in Large Language Models},
author={Siwon Kim and Sangdoo Yun and Hwaran Lee and Martin Gubri and Sungroh Yoon and Seong Joon Oh},
booktitle={Thirty-seventh Conference on Neural Information Processing Systems},
year={2023},
url={https://openreview.net/forum?id=QkLpGxUboF}
}

@inproceedings{hayes2024measuring,
  title={Measuring memorization in language models via probabilistic extraction},
  author={Jamie Hayes and Marika Swanberg and Harsh Chaudhari and Itay Yona and Ilia Shumailov},
  booktitle={North American Chapter of the Association for Computational Linguistics},
  year={2024},
  url={https://api.semanticscholar.org/CorpusID:273638180}
}

@inproceedings{
biderman2023emergent,
title={Emergent and Predictable Memorization in Large Language Models},
author={Stella Biderman and USVSN Sai Prashanth and Lintang Sutawika and Hailey Schoelkopf and Quentin Gregory Anthony and Shivanshu Purohit and Edward Raff},
booktitle={Thirty-seventh Conference on Neural Information Processing Systems},
year={2023},
url={https://openreview.net/forum?id=Iq0DvhB4Kf}
}

@inproceedings{lee2022deduplicating,
    title = "Deduplicating Training Data Makes Language Models Better",
    author = "Lee, Katherine  and
      Ippolito, Daphne  and
      Nystrom, Andrew  and
      Zhang, Chiyuan  and
      Eck, Douglas  and
      Callison-Burch, Chris  and
      Carlini, Nicholas",
    editor = "Muresan, Smaranda  and
      Nakov, Preslav  and
      Villavicencio, Aline",
    booktitle = "Proceedings of the 60th Annual Meeting of the Association for Computational Linguistics (Volume 1: Long Papers)",
    month = may,
    year = "2022",
    address = "Dublin, Ireland",
    publisher = "Association for Computational Linguistics",
    url = "https://aclanthology.org/2022.acl-long.577/",
    doi = "10.18653/v1/2022.acl-long.577",
    pages = "8424--8445",
}

@inproceedings{ippolito2023preventing,
    title = "Preventing Generation of Verbatim Memorization in Language Models Gives a False Sense of Privacy",
    author = "Ippolito, Daphne  and
      Tramer, Florian  and
      Nasr, Milad  and
      Zhang, Chiyuan  and
      Jagielski, Matthew  and
      Lee, Katherine  and
      Choquette Choo, Christopher  and
      Carlini, Nicholas",
    editor = "Keet, C. Maria  and
      Lee, Hung-Yi  and
      Zarrie{\ss}, Sina",
    booktitle = "Proceedings of the 16th International Natural Language Generation Conference",
    month = sep,
    year = "2023",
    address = "Prague, Czechia",
    publisher = "Association for Computational Linguistics",
    url = "https://aclanthology.org/2023.inlg-main.3/",
    doi = "10.18653/v1/2023.inlg-main.3",
    pages = "28--53"
}

@article{duan2024membership,
  title={Do membership inference attacks work on large language models?},
  author={Duan, Michael and Suri, Anshuman and Mireshghallah, Niloofar and Min, Sewon and Shi, Weijia and Zettlemoyer, Luke and Tsvetkov, Yulia and Choi, Yejin and Evans, David and Hajishirzi, Hannaneh},
  journal={arXiv preprint arXiv:2402.07841},
  year={2024}
}

@misc{mimir_dataset,
  title = {MIMIR Dataset},
  author = {Duan, Michael and others},
  year = {2024},
  howpublished = {\url{https://huggingface.co/datasets/iamgroot42/mimir}},
  note = {Accessed: 2026-03-17}
}

@misc{realtimedata_arxiv_alltime,
  title = {RealTimeData arXiv AllTime Dataset},
  author = {RealTimeData},
  year = {2025},
  howpublished = {\url{https://huggingface.co/datasets/RealTimeData/arxiv_alltime}},
  note = {Accessed: 2026-03-17}
}

@misc{realtimedata_wikitext_alltime,
  title = {RealTimeData WikiText AllTime Dataset},
  author = {RealTimeData},
  year = {2025},
  howpublished = {\url{https://huggingface.co/datasets/RealTimeData/wikitext_alltime}},
  note = {Accessed: 2026-03-17}
}

@misc{llada_8b_base,
  title = {LLaDA-8B-Base},
  author = {GSAI-ML},
  year = {2025},
  howpublished = {\url{https://huggingface.co/GSAI-ML/LLaDA-8B-Base}},
  note = {Accessed: 2026-03-17}
}

@misc{llada_8b_instruct,
  title = {LLaDA-8B-Instruct},
  author = {GSAI-ML},
  year = {2025},
  howpublished = {\url{https://huggingface.co/GSAI-ML/LLaDA-8B-Instruct}},
  note = {Accessed: 2026-03-17}
}

@article{li2023chatdoctor,
  title={Chatdoctor: A medical chat model fine-tuned on a large language model meta-ai (llama) using medical domain knowledge},
  author={Li, Yunxiang and Li, Zihan and Zhang, Kai and Dan, Ruilong and Jiang, Steve and Zhang, You},
  journal={Cureus},
  volume={15},
  number={6},
  year={2023},
  publisher={Cureus}
}

@misc{alpaca,
  author = {Rohan Taori and Ishaan Gulrajani and Tianyi Zhang and Yann Dubois and Xuechen Li and Carlos Guestrin and Percy Liang and Tatsunori B. Hashimoto},
  title = {Stanford Alpaca: An Instruction-following LLaMA model},
  year = {2023},
  publisher = {GitHub},
  journal = {GitHub repository},
  howpublished = {\url{https://github.com/tatsu-lab/stanford_alpaca}}
}

@misc{lambert2024tulu,
      title={Tulu 3: Pushing Frontiers in Open Language Model Post-Training}, 
      author={Nathan Lambert and Jacob Morrison and Valentina Pyatkin and Shengyi Huang and Hamish Ivison and Faeze Brahman and Lester James V. Miranda and Alisa Liu and Nouha Dziri and Shane Lyu and Yuling Gu and Saumya Malik and Victoria Graf and Jena D. Hwang and Jiangjiang Yang and Ronan Le Bras and Oyvind Tafjord and Chris Wilhelm and Luca Soldaini and Noah A. Smith and Yizhong Wang and Pradeep Dasigi and Hannaneh Hajishirzi},
      year={2025},
      eprint={2411.15124},
      archivePrefix={arXiv}
}

@misc{Dream_v0_Instruct_7B,
  author = {Dream-org},
  title = {Dream-v0-Instruct-7B},
  year = {2024},
  howpublished = {\url{https://huggingface.co/Dream-org/Dream-v0-Instruct-7B}},
  note = {Accessed: 2026-05-24}
}

@misc{Dream_v0_Base_7B,
  author = {Dream-org},
  title = {Dream-v0-Base-7B},
  year = {2024},
  howpublished = {\url{https://huggingface.co/Dream-org/Dream-v0-Base-7B}},
  note = {Accessed: 2026-05-24}
}

@inproceedings{wang2026silent,
  title={Silent leaks: Implicit knowledge extraction attack on RAG systems},
  author={Wang, Yuhao and Qu, Wenjie and Zhai, Shengfang and Jiang, Yanze and Zichen, Liu and Liu, Yue and Dong, Yinpeng and Zhang, Jiaheng},
  booktitle={International Conference on Learning Representations},
  volume={2026},
  pages={24150--24191},
  year={2026}
}

@inproceedings{zhai2025efficient,
  title={Efficient input-level backdoor defense on text-to-image synthesis via neuron activation variation},
  author={Zhai, Shengfang and Li, Jiajun and Liu, Yue and Chen, Huanran and Tian, Zhihua and Qu, Wenjie and Shen, Qingni and Jia, Ruoxi and Dong, Yinpeng and Zhang, Jiaheng},
  booktitle={2025 IEEE/CVF International Conference on Computer Vision (ICCV)},
  pages={15182--15193},
  year={2025},
  organization={IEEE}
}

@article{zhai2026baddlm,
  title={BadDLM: Backdooring Diffusion Language Models with Diverse Targets},
  author={Zhai, Shengfang and Ji, Xiaoyang and Shi, Yuling and Gao, Haoran and Meng, Fanyu and Zeng, Yan and Fang, Yuejian and Dong, Yinpeng and Zhang, Jiaheng},
  journal={arXiv preprint arXiv:2605.09397},
  year={2026}
}

@article{zhu2024enhancing,
  title={Enhancing zero-shot vision models by label-free prompt distribution learning and bias correcting},
  author={Zhu, Xingyu and Zhu, Beier and Tan, Yi and Wang, Shuo and Hao, Yanbin and Zhang, Hanwang},
  journal={Advances in Neural Information Processing Systems},
  volume={37},
  pages={2001--2025},
  year={2024}
}

@article{zhai2024discovering,
  title={Discovering universal semantic triggers for text-to-image synthesis},
  author={Zhai, Shengfang and Wang, Weilong and Li, Jiajun and Dong, Yinpeng and Su, Hang and Shen, Qingni},
  journal={arXiv preprint arXiv:2402.07562},
  year={2024}
}

\newpage
\appendix
\section{Theoretical Analysis}
\subsection{Proof of \cref{th:pos_skew}}
\label{appendix:prf}
\begin{proof}[Proof of \cref{th:pos_skew}]
For brevity, we omit the token index $w_i$ for the single-epoch memory gain and denote it as $\delta$
Consider the random variable for a single epoch $\delta = m \cdot \phi(\beta)$ (subscripts omitted for brevity).
The marginal probability of a token being selected is $P(m=1) = \mathbb{E}[\beta] = 0.5$.
Conditioned on selection ($m=1$), the probability density of the mask rate becomes:
$$f_{\beta|m=1}(\beta) = 2\beta.$$
Let $V$ be a random variable drawn from this conditional distribution, representing the update magnitude when a token is selected, i.e., $V = \phi(\beta)|_{m=1}$. We can thus rewrite the unconditional update as $\delta = J \cdot V$, where the indicator $J \sim \text{Bernoulli}(0.5)$ is independent of $V$.

To determine skewness, we analyze the third central moment $\mu_3(\delta)$. Using the definition of moments for this mixture:
$$\mu_3(\delta) = \frac{1}{4} \left( 2\mathbb{E}[V^3] - 3\mathbb{E}[V]\mathbb{E}[V^2] + (\mathbb{E}[V])^3 \right)$$
Applying the AM-GM Inequality and Lyapunov’s Inequality: 
$$(\mathbb{E}[V^3])^{1/3} \ge (\mathbb{E}[V^2])^{1/2},$$
and given that $\phi(\beta)$ is non-constant (implying $\text{Var}(V) > 0$), we obtain the strict inequality:
\[
\begin{split}
& 2\mathbb{E}[V^3] + (\mathbb{E}[V])^3 > 3\mathbb{E}[V]\mathbb{E}[V^2]  \\ & \implies \mu_3(\delta) > 0 
\end{split}
\]

To extend this to the cumulative gain 
$\mathcal{G}_{w_{i}}= \sum_{k=1}^K \delta_k$, 
we note that within a short fine-tuning window, the model parameters evolve smoothly, i.e., $\theta_{k+1} \approx \theta_k$. 
Thus, the token selection mechanism, the mask-rate distribution, and the shape of the update function $\phi(\beta)$ do not drift significantly across nearby epochs. 
Based on this, we assume for analytical tractability that the token-level updates $\delta_k$ are locally approximately independent and identically distributed (locally approximately i.i.d.) across epochs within this window.
Then the skewness of the sum scales by $K^{-1/2}$, preserving the positive asymmetry:
$$
Skew(\mathcal{G}_{w_{i}}) = \frac{1}{\sqrt{K}}Skew(\delta) > 0.
$$

\end{proof}

Note that in this proof, we assume approximately i.i.d. token-level updates across epochs.
We further provide an analysis without this approximate i.i.d. assumption in Appendix \ref{appd:proof_wo_iid}.

\subsection{Long Tail Analysis}\label{appendix:longtail}
\begin{remark}[Mechanism of Tail Formation]
    While \cref{th:pos_skew} proves asymmetry, the specific "heavy-tailed" shape arises from the interaction between the sampling probability and the update strength. Consider the probability density function (PDF) $f_V(v)$ of the update magnitude $V$. By the change of variables, $f_V(v) = f_{\beta|m=1}(\phi^{-1}(v)) \cdot |\frac{d\beta}{dv}|$. Since $f_{\beta|m=1}(\beta) = 2\beta$, we have:$$f_V(v) \propto \beta(v) \cdot \left| \frac{d\beta}{dv} \right|$$Since $\phi(\beta)$ is monotonically decreasing, high memory gains (large $v$) correspond to low mask rates (small $\beta$). Crucially, as $v \to \infty$, $\beta \to 0$. This implies that the probability weight term $\beta(v)$ vanishes for large memory updates. Physically, this means extreme memorization events are structurally rare: they require a low $\beta$ to generate a large gradient, but low $\beta$ inherently reduces the probability of the token being sampled. This probabilistic suppression forces the density to decay, preventing the formation of a high-value mode (bimodality) and instead forming a monotonically decaying long tail.
\end{remark}

\begin{figure*}
    \centering
    \includegraphics[width=1\linewidth]{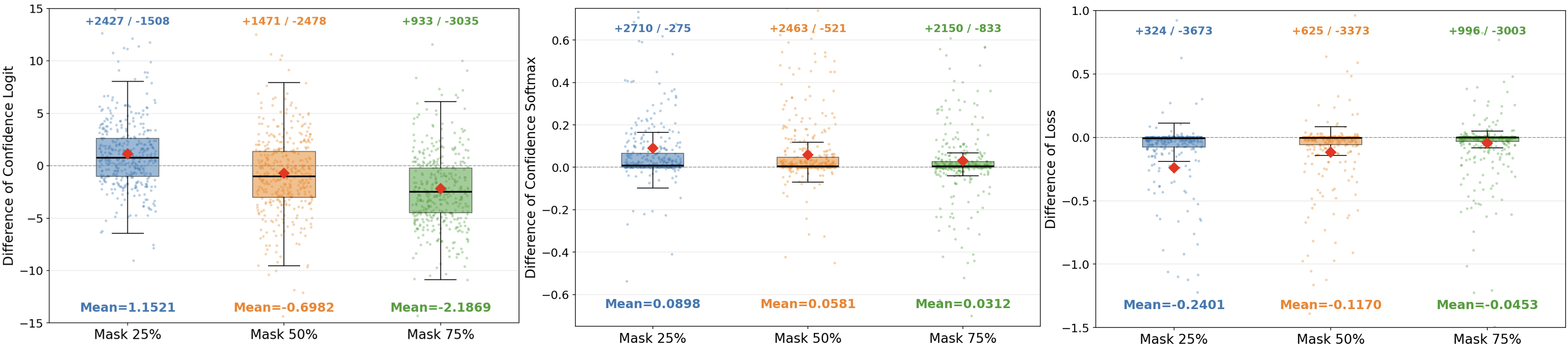}
    \caption{Inverse Mask-Ratio Scaling phenomenon with different metrics on XSUM~\cite{narayan2018don} datasets.}
    \label{fig:more_inverse}
\end{figure*}

\subsection{A Non-i.i.d. Extension of the Skewness Analysis}
\label{appd:proof_wo_iid}

The proof in Appendix~\ref{appendix:prf} uses a locally approximately i.i.d. approximation to transfer the positive skewness of single-epoch memory gains to the cumulative gain across epochs. This approximation leads to a clean analytical argument, but it is slightly stronger than what holds in practice. 
In this part, we relax the locally i.i.d. assumption and analyze the more general case in which epoch-wise token-level updates may be dependent. 
Although this setting no longer yields the same closed-form skewness scaling, the decomposition below shows why the positive-skewness tendency should generally persist unless cross-epoch dependence introduces a dominant negative third-order contribution.
More generally, for a fixed token $w_i$, let:
\[
\mathcal{G}_{w_i}=\sum_{k=1}^{K}\delta_{w_i,k},
\qquad
\widetilde{\delta}_{w_i,k}
=
\delta_{w_i,k}
-
\mathbb{E}[\delta_{w_i,k}].
\]
Without assuming independence across epochs, the third central moment of the cumulative gain can be written as:
\[
\mu_3(\mathcal{G}_{w_i})
=
\mathbb{E}
\left[
\left(
\sum_{k=1}^{K}\widetilde{\delta}_{w_i,k}
\right)^3
\right].
\]
Expanding this term gives:
\[
\mu_3(\mathcal{G}_{w_i})
=
\sum_{k=1}^{K}
\mathbb{E}
\left[
\widetilde{\delta}_{w_i,k}^{\,3}
\right]
+
C_{w_i,K},
\]
where
\[
\begin{aligned}
C_{w_i,K}
={}&
3
\sum_{\substack{1\leq a,b\leq K\\ a\neq b}}
\mathbb{E}\!\left[
\widetilde{\delta}_{w_i,a}^{\,2}
\widetilde{\delta}_{w_i,b}
\right] \\
&+
6
\sum_{1\leq a<b<c\leq K}
\mathbb{E}\!\left[
\widetilde{\delta}_{w_i,a}
\widetilde{\delta}_{w_i,b}
\widetilde{\delta}_{w_i,c}
\right].
\end{aligned}
\]
The first term captures the marginal third central moments of the epoch-wise gains. 
Under the per-epoch zero-inflated inverse mask-ratio mechanism, and under the finite third-moment and non-degeneracy assumptions stated in Appendix~\ref{appendix:prf}, these marginal third central moments are positive. 
The second term $C_{w_i,K}$ captures the mixed third-order central moments across epochs.

This decomposition clarifies the role of dependence across epochs. 
Dependence alone does not invalidate the argument. 
The conclusion would fail only if the epoch-wise gains show strong negative third-order dependence. 
We do not expect this to be common in our fine-tuning setting. 
The denoising objective and member examples stay the same across nearby checkpoints, while the mask pattern is resampled in each epoch. As a result, zero-inflated token selection and low-mask-ratio amplification are still the main sources of asymmetry.

Motivated by the finite fine-tuning window considered in our setting, we expect the mixed third-order contribution not to overwhelm the positive marginal contribution. 
This non-dominance relation can be written as:
\[
C_{w_i,K}
>
-
\sum_{k=1}^{K}
\mathbb{E}
\left[
\widetilde{\delta}_{w_i,k}^{\,3}
\right],
\]
which implies
\(
\mu_3(\mathcal{G}_{w_i})>0.
\)
Since $\operatorname{Var}(\mathcal{G}_{w_i})>0$ under the non-degeneracy assumption, we have:
\[
\operatorname{Skew}(\mathcal{G}_{w_i})
=
\frac{
\mu_3(\mathcal{G}_{w_i})
}{
\operatorname{Var}(\mathcal{G}_{w_i})^{3/2}
}
>0.
\]

\section{Validation of Inverse Mask-Ratio Scaling Assumption}
\label{sec:more_inverse}

We use different metrics to estimate the memorization of individual tokens, including prediction confidence, with or without softmax, and token-level loss.
We present the experimental results using violin plots with jitter plots (\cref{fig:more_inverse}).
It is evident that a lower mask ratio leads to more significant token memorization on average under the same number of training epochs, across different metrics.

\begin{figure*}[t]
    \centering
    \includegraphics[width=0.99\linewidth]{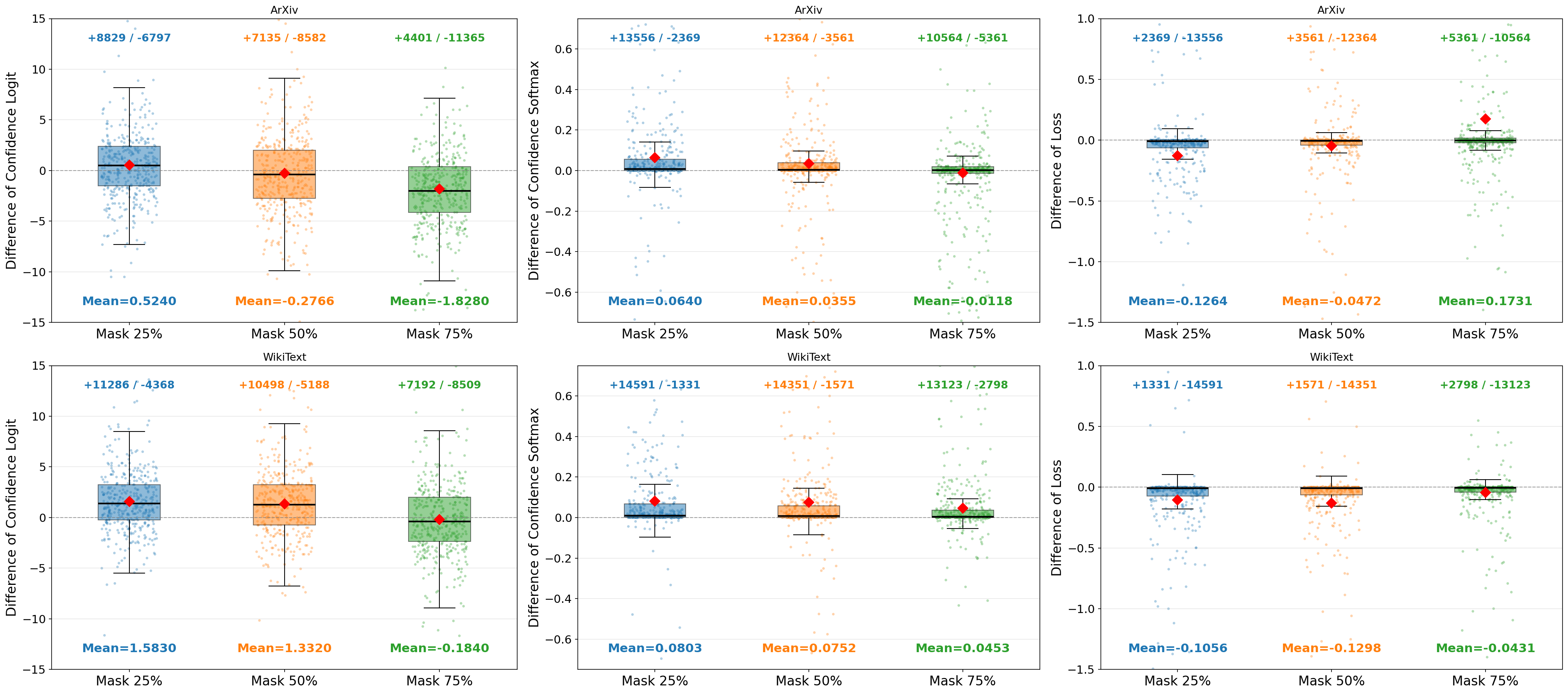}
    \caption{Validation under general domain: the inverse mask-ratio scaling phenomenon under different metrics on larger-scale ArXiv~\cite{realtimedata_arxiv_alltime} and WikiText~\cite{realtimedata_wikitext_alltime} datasets.
}
    \label{fig:arxiv_wiki_logit_softmax_loss}
\end{figure*}

To further validate the broad applicability of the inverse mask-ratio scaling assumption, we further extend our experiments to different data domains, including ArXiv~\cite{realtimedata_arxiv_alltime} and WikiText~\cite{realtimedata_wikitext_alltime}, and increase the dataset size from 125 samples to 1,000 samples. As shown in \cref{fig:arxiv_wiki_logit_softmax_loss}, the experimental results show that this assumption widely holds across different data domains and data scales.


\section{Implementation of MIA Baselines}
\label{appd:baselines}

To the best of our knowledge, SAMA \cite{chen2026membership} is currently the only MI method specifically designed for diffusion language models.
Therefore, we further include AR-LM MI methods and MI methods for vision diffusion models that can be transferred to DLMs as baselines under the same threat models (\cref{subsec:threat}). 
Overall, our baselines include three categories: (1) AR-LM MI baselines: Loss \cite{yeom2018privacy}, Min-K\%\cite{shi2023detecting}, Min-K\%++\cite{zhang2024min}, and Calibration \cite{watson2021importance}; (2) diffusion vision model MI baselines: SecMI \cite{duan2023diffusion}; and (3) DLM-specific baselines: SAMA \cite{chen2026membership}.

\textbf{To ensure fairness, all baselines are evaluated under the same settings and computing budget}, including the same member and non-member splits, the same target and reference access, the same ROC-based metrics, and the same denoising-step budget of 16 forward passes for each sample.
\ding{182} For the DLM-specific baseline, we strictly follow the settings in its original paper.
\ding{183} For baselines adapted from AR-LMs or vision diffusion models, we adapt them to DLMs as closely as possible.
\ding{184} For SAMA \cite{chen2026membership} we strictly follow the original settings in the paper.
Below, we describe how we implement each baseline.

\subsection{AR-LM MI Baselines Adapted for DLMs}

\paragraph{Loss \cite{yeom2018privacy}.}  We use the same denoising ratio as our method to compute the DLM loss. We then average the loss over 16 denoising runs, which means that the loss is computed 16 times. A smaller average loss indicates that the sample is more likely to belong to the member set.

\paragraph{Min-K\%~\cite{shi2023detecting} and Min-K\%++ \cite{zhang2024min}.} Following the original papers, we compute the Min-K\% and Min-K\%++ scores from token-level log-likelihoods at each denoising step, using the same denoising ratio as our method. For Min-K\%, we average the log-likelihoods of the bottom 20\% tokens. For Min-K\%++, we first apply vocabulary-level normalization and then average the bottom 20\% normalized token scores. Finally, we average the score over 16 denoising runs. A higher score indicates that the sample is more likely to belong to the member set.

\paragraph{Calibration \cite{watson2021importance}.} Following the original paper, we use a reference model to calibrate sample difficulty. Specifically, we compute the difference between the average loss of the reference model and that of the target model on masked tokens, averaged over 16 denoising runs, and use this difference as the Calibration score. A larger score means that the target model has a lower loss than the reference model on the sample, so the sample is more likely to belong to the member set.

\subsection{(Vision) Diffusion MI Baselines adapted for DLMs}

\paragraph{SecMI~\cite{duan2023diffusion}.}Following the original paper, we strictly compute the prior difference between steps (t+1) and (t). A lower reconstruction error indicates a higher membership probability, because the model can better reconstruct the tokens it has memorized during training. In the original paper, $t$ is set to 100 steps out of 1000. For DLMs, we set the masking ratio to 10\% to remain consistent with the original paper.

\paragraph{PIA~\cite{kong2024an}.} We do not adapt PIA because this method relies on a derived formula that extrapolates the result as $t$ approaches 0. For DLMs, this derivation does not hold.

\paragraph{CLiD~\cite{zhai2024membership}.}
We do not adapt CLiD because it is designed for conditional vision diffusion models that take both text and images as input, which does not match the setting of DLMs.

\section{Dataset Selection} 
\label{appd:dataset}

\subsection{Hallucination of MI Success on MIMIR Dataset in Previous Works.}
\label{sec:hallu}
\begin{table}[t]

  \centering

  \caption{Performance of Loss MI on unfinetuned models with MIMIR datasets. 
  The high AUC values indicate that \textbf{LLaDA model possesses a strong prior on the MIMIR training (member) set}, which leads to a hallucination of membership inference success.
  }
  \resizebox{\linewidth}{!}{

    \begin{tabular}{lccc}

    \toprule

    Dataset & AUC   & TPR@10\%FPR & TPR@1\%FPR \\

    \midrule

    Arxiv           & 0.70  & 0.32  & 0.05 \\

    Github          & 0.81  & 0.47  & 0.07 \\

    Hackernews      & 0.56  & 0.18  & 0.02 \\

    Pile\_cc        & 0.53  & 0.13  & 0.02 \\

    Pubmed\_central & 0.65  & 0.22  & 0.01 \\

    Wikipedia\_(en) & 0.63  & 0.28  & 0.05 \\

    \midrule

    \textbf{Average} & \textbf{0.64} & \textbf{0.27} & \textbf{0.04} \\

    \bottomrule

    \end{tabular}
}\label{tab:pre_mimir}
\end{table}
Note that the evaluation of membership inference (MI) on fine-tuned models differs fundamentally from that on pre-trained models~\cite{duan2024membership}. 
For MI on pre-trained language models, researchers perform no additional training and utilize data from distinct release periods \cite{mimir_dataset} to detect whether the model memorized specific samples. 
In contrast, 
evaluation data for MI during the fine-tuning stage must ensure that the model possesses equivalent prior knowledge for both member and non-member samples. 
This consistency ensures that MI success stems from actual memory extraction rather than internal model priors. 
We note that a concurrent work \cite{chen2026membership} fine-tunes LLaDA-8B-Base on MIMIR \cite{mimir_dataset} to evaluate its MI performance. 
However, MIMIR is specifically designed for pre-training MI analysis. 
We evaluate the unfinetuned LLaDA on the MIMIR dataset using a loss-based MI method and report the results in \cref{tab:pre_mimir}. 
We find that the model exhibits strong inherent priors on MIMIR even without any fine-tuning. This bias suggests that seemingly successful MI results may be independent of the inference method itself, making MIMIR unsuitable for evaluating MI during the fine-tuning phase.

\subsection{Our Dataset Selection}
To ensure a fair evaluation, we utilize recently collected data \cite{realtimedata_arxiv_alltime,realtimedata_wikitext_alltime} for the Arxiv and Wiki datasets to guarantee that the model has no prior knowledge of the training or test sets. 
For the other dataset in \cref{sec:evaluation}, we clean the samples to prevent leakage between member and non-member sets and to ensure that model internal priors or bias \cite{zhu2024enhancing} remain consistent.

\section{Justification of Accessing Reference Models in Logit-based Settings.}
\label{appd:justify}

In our paper, we follow the standard setting in membership inference \cite{yeom2018privacy, shi2023detecting,zhang2024min,duan2023diffusion,kong2024an,zhai2024membership,chen2026membership,fu2024membership}, where the attacker can access the logits of the target model  (known as logit-based methods).
In addition, we assume that the adversary can access a reference model, although it may not be fully aligned with the target model, following \cite{chen2026membership,fu2024membership}. 

In this part, we justify the scenario where the attacker can access a reference model under the logit-based threat model: 
(1) Knowing the architecture of the target model is natural under the logit-based setting. Models with different architectures usually employ different tokenizers, output dimensions, and token representations. For example, LLaDA~\cite{nie2025large} and Dream~\cite{ye2025dream} use completely different tokenizers and token representations. Therefore, under the logit-based setting, an MI adversary can naturally infer the architecture of the target model and then select the corresponding reference model.
(2) Given the model architecture, it is easy to obtain either the non fine-tuned base model or a structurally identical but misaligned reference model. Since our method targets the fine-tuning stage, the corresponding base models are usually open-source. Therefore, once the MI adversary knows the architecture of the target model, it can easily obtain the corresponding non fine-tuned base model. Moreover, our experiments show that our method remains effective even when the reference model parameters are perturbed through additional fine-tuning (\cref{sec:weaker}).

\section{Additional Results on PII Extraction}
\label{appendix:more-results-pii-extraction}

\subsection{Ablation Study}
\label{appendix:recon-abl}

\begin{figure*}[t]
  \centering
  \begin{subfigure}[b]{0.24\linewidth}
    \includegraphics[width=\linewidth]{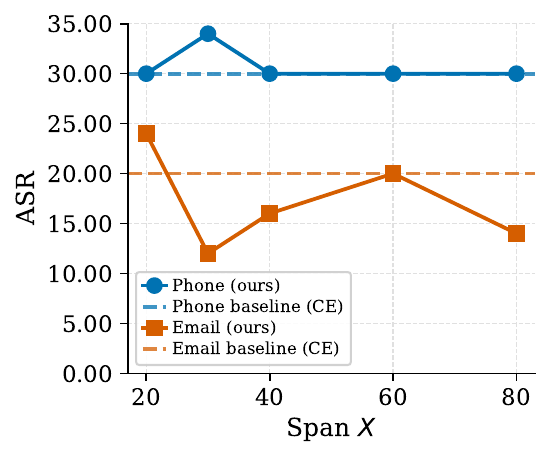}
  \end{subfigure}
  \hfill
  \begin{subfigure}[b]{0.24\linewidth}
    \includegraphics[width=\linewidth]{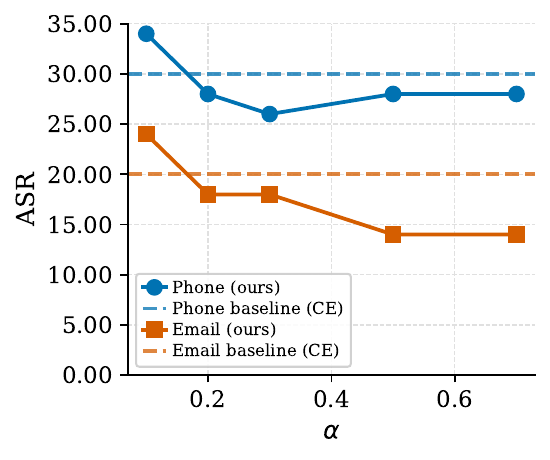}
  \end{subfigure}
  \hfill
  \begin{subfigure}[b]{0.24\linewidth}
    \includegraphics[width=\linewidth]{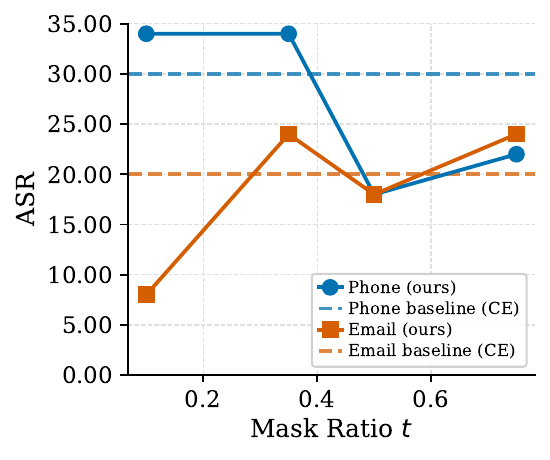}
  \end{subfigure}
  \hfill
  \begin{subfigure}[b]{0.24\linewidth}
    \includegraphics[width=\linewidth]{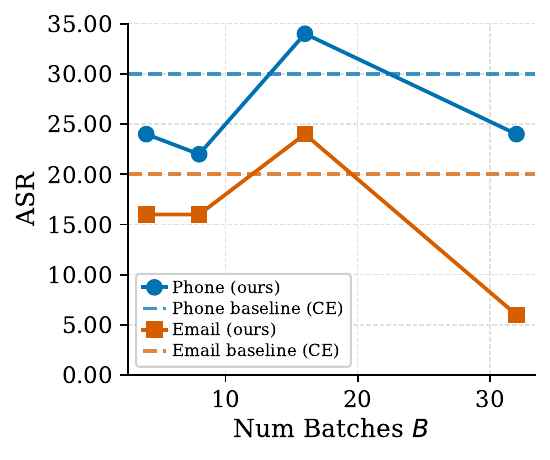}
  \end{subfigure}
  \caption{Ablation results of LLaDA for PII extraction.
           Dashed lines denote the CE-based (\texttt{Ranking}) method.}
  \label{fig:llada_ablation}
\end{figure*}

\topic{Hyperparameter Investigation.} We investigate four hyperparameters affecting our combined loss-skewness score and report the resulting ASR for LLaDA in Fig.~\ref{fig:llada_ablation}. From left to right, the plots vary: (1) the token span $X$ used for skewness calculation, (2) the weighting parameter $\alpha$, (3) the masking ratio $t$, and (4) the number of masking batches $B$. We observe only a weak relationship between token span $X$ and ASR, although emails show some improvement between spans of 20 and 30. Increasing $\alpha$ consistently reduces ASR across both PII types, highlighting the importance of the cross-entropy loss signal in the combined score. For mask ratio $t$, the effect is strongly PII-dependent. Higher mask ratios reduce ASR for phone numbers, but substantially improve performance for emails compared to lower ratios such as $0.1$. Finally, for the number of masking batches $B$, both very small and very large values degrade performance. We hypothesize that excessively large $B$ values reduce the skewness of loss differences, weakening the membership signal.

\begin{table}[ht]
  \centering
  \caption{%
    ASR@k (\%) of different attack methods on LLaDA-8B-Base
    for phone numbers and email addresses.
    Higher is better. Best results per column are in bold.
  }
  \resizebox{\linewidth}{!}{
    \begin{tabular}{llcccccc}
    \toprule
    \multirow{2}{*}{\textbf{Our Attacks}} 
    & \multirow{2}{*}{\textbf{Signal}} 
    & \multicolumn{3}{c}{\textbf{Phone}} 
    & \multicolumn{3}{c}{\textbf{Email}} \\ \cmidrule(lr){3-5} \cmidrule(lr){6-8}
    & & \textbf{@1} & \textbf{@3} & \textbf{@5} 
      & \textbf{@1} & \textbf{@3} & \textbf{@5} \\ 
    \midrule \midrule

    \texttt{Ranking} & CE 
    & 30 & \textbf{42} & \textbf{44} 
    & 20 & 36 & 42 \\

    \texttt{Mask only PII} & \shortname 
    & 22 & 36 & 42 
    & 16 & 28 & 34 \\

    \texttt{Mask only context} & \shortname 
    & 10 & 22 & 26 
    & \hspace{0.5em}8 & 10 & 14 \\

    \texttt{Combine-0.1} & Both 
    & \textbf{34} & 38 & 42 
    & \textbf{24} & \textbf{44} & \textbf{52} \\

    \bottomrule
    \end{tabular}
  }
  \label{tab:pii_recon_topk}
\end{table}

\topic{Top-K candidate Ranking}. In Table \ref{tab:pii_recon_topk}, we show the top-k accuracy of different ranking methods where k is either 1, 3, or 5. Surprisingly for phone numbers, \texttt{Ranking} narrowly beats \texttt{Combine-0.1} for values of k larger than 1. Interestingly, the top-k accuracy advantage over \texttt{Ranking} in the case of emails improves drastically when k is equal to 5, with over 50\% accuracy, the highest of any method and any value of k. Unsurprisingly, \texttt{Mask only PII} and \texttt{Mask only context} perform worse by comparison, similar to the top-1 accuracy.

\topic{Setting $\alpha$.} We assume that an attacker with partial knowledge of the fine-tuning dataset could reasonably approximate an effective $\alpha$ for the loss-skewness combined attack.

\begin{figure}[h]
  \centering
  \begin{subfigure}[b]{0.8\linewidth}
    \includegraphics[width=\linewidth]{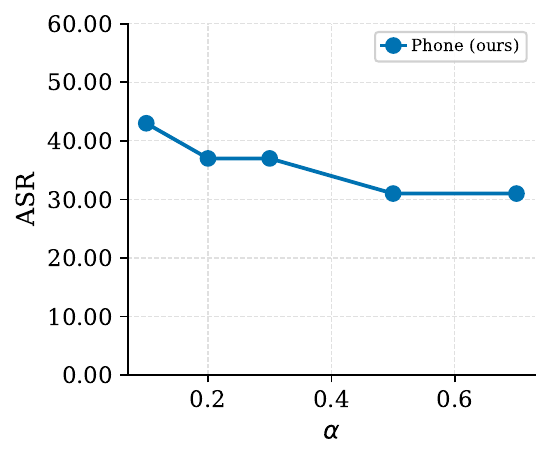}
  \end{subfigure}
  \hfill
  \caption{
  Impact of choosing $\alpha$ in (0, 1) on LLaDA for phone PII reconstruction,
  showing strong performance even with a small hold-out set. Higher is better.}
  \label{fig:alpha_justification}
\end{figure}
In this scenario, we assume that the attacker has access to a small set of full, un-redacted records from the training dataset.
In addition, we assume that they can interact with the target model the same way as when they target the unknown PII. 
To simulate this attacker, we select 16 records containing phone PII, and do a sweep of the $\alpha$ hyperparameter across \texttt{Combine} attacks. 
Critically, these 16 records form a hold-out set that does not overlap with the set of target records and PIIs used in the main evaluation. 
Fig. \ref{fig:alpha_justification} shows that with only a small hold-out sample of target records, an $\alpha$ value of 0.1 leads to the highest ASR.

\begin{figure}[ht]
\centering
\begin{tcolorbox}[title=Extracted by Skewness, colback=gray!10, colframe=black!50]
\small
\begin{verbatim}
w3c-sgml-wg
w3c-wai-ig
w3c-wai-er-ig
w3c-dist-auth
\end{verbatim}
\end{tcolorbox}

\begin{tcolorbox}[title=Extracted by \texttt{Ranking}, colback=blue!10, colframe=blue!50]
\small
\begin{verbatim}
danny666
gclemm
jpalme
jdbryan
\end{verbatim}
\end{tcolorbox}
\caption{%
    Examples of email IDs extracted exclusively by each candidate-scoring method.}
\label{fig:qualitative-pii-structure}
\end{figure}

\begin{figure}[ht]
\centering
\begin{tcolorbox}[title=Extracted by \texttt{Combine-0.1}, colback=gray!10, colframe=black!50]
\small
\begin{verbatim}
w3c-wai-ig 
wai-report
ietf-discuss
w3c-wai-er-ig
w3c-dist-auth
w3c-wai-gl
w3c-wai-gl
w3c-sgml-wg
\end{verbatim}
\end{tcolorbox}

\caption{Examples of email IDs extracted under \texttt{Combine-0.1} but \textit{not} \texttt{Ranking}.
}
\label{fig:qualitative-pii-structure-hybrid}
\end{figure}

\subsection{Qualitative analysis.} 
\label{appendix:pii-qual-analysis}

Our \texttt{Combine-0.1} score achieves higher top-1 accuracy across multiple settings, although skewness alone never outperforms \texttt{Ranking}. We attribute this to the two methods recovering different subsets of PIIs. For email targets reconstructed only by \texttt{Ranking}, the true PIIs tend to be shorter and closer to natural language, suggesting that pretraining priors may bias loss toward these patterns. In contrast, targets reconstructed only by skewness tend to be longer and more structured (Fig.~\ref{fig:qualitative-pii-structure}), which may also explain the consistently higher attack success rate on phone numbers. 

For target emails extracted by \texttt{Combine-0.1} but not \texttt{Ranking}, Fig.~\ref{fig:qualitative-pii-structure-hybrid} shows that the skewness component helps recover more structured identifiers, reinforcing the trend observed in Fig.~\ref{fig:qualitative-pii-structure}. For phone targets reconstructed only by \texttt{Ranking}, candidate sets are generally larger, highlighting the strength of loss for distinguishing correct in-context training data from a large non-training set. In contrast, skewness-only reconstructions tend to produce smaller but more token-diverse candidate sets (Fig.~\ref{fig:qualitative-candidate-output}). These skewness-only targets are also typically longer, aligning with prior observations that longer contexts increase memorization in autoregressive models \cite{carlini2023quantifying}.

\begin{figure}[ht]
\centering
\begin{tcolorbox}[title=Extracted by Skewness, colback=gray!10, colframe=black!50]
\small
\begin{verbatim}
617-983-4033 
617-693-4603 
617-693-4309 
617-964-3073 
617-693-4096 
617-639-4036 
617-693-4063
617-693-3034
617-693-4039 
617-693-4030 
617-693-4308 
617-693-0433 
617-693-6032 
617-693-0133 
617-693-4006
\end{verbatim}
\end{tcolorbox}

\begin{tcolorbox}[title=Extracted by \texttt{Ranking}, colback=blue!10, colframe=blue!50]
\small
\begin{verbatim}
212 834-8114 
212 648-1814 
212 864-8811 
212-484-1814 
212 648-8114 
212 684-1844 
(212) 684-1844
(212) 684-0994
(212) 684-1414
(212) 844-1814
212 258-1814
212 658-1814
221-688-1487
122 684-9411
(212) 684-1841 
212 254-1814 
(212) 684-1814 
212-648-1814 
613 289-8124 
212 683-2484 
718 488-1813 
212 864-1814 
212 684-1417 
212 683-1812 
212 862-1814 
212-684-1814 
418-268-1814 
212-864-9091 
617 254-1818" 
212 852-1814
\end{verbatim}
\end{tcolorbox}
\caption{Examples of candidate sets for targets extracted exclusively by a given scoring method.}
\label{fig:qualitative-candidate-output}
\end{figure}

\end{document}